\newtheorem{definition}{Definition}
\newtheorem{proof}{Proof}
\newtheorem{proposition}{Proposition}
\title{ Scaling Linear Attention with Sparse State Expansion  }
\affiliation[1]{ByteDance Seed}
\affiliation[2]{Institute of Automation, Chinese Academy of Sciences}
\affiliation[3]{The Hong Kong Polytechnic University}
\affiliation[4]{UC Santa Cruz}
\abstract{
The Transformer architecture, despite its widespread success, struggles with long-context scenarios due to quadratic computation and linear memory growth. While various linear attention variants mitigate these efficiency constraints by compressing context into fixed-size states, they often degrade performance in tasks such as in-context retrieval and reasoning. To address this limitation and achieve more effective context compression, we propose two key innovations. First, we introduce a \textbf{row-sparse update formulation} for linear attention by conceptualizing state updating as information classification. This enables sparse state updates via softmax-based top-$k$ hard classification, thereby extending receptive fields and reducing inter-class interference. Second, we present \textbf{Sparse State Expansion (SSE)} within the sparse framework, which expands the contextual state into multiple partitions, effectively decoupling parameter size from state capacity while maintaining the sparse classification paradigm. Supported by efficient parallelized implementations, our design achieves effective classification and highly discriminative state representations. We extensively validate SSE in both pure linear and hybrid (SSE-H) architectures across language modeling, in-context retrieval, and mathematical reasoning benchmarks. SSE demonstrates strong retrieval performance and scales favorably with state size. Moreover, after reinforcement learning (RL) training, our 2B SSE-H model achieves \textbf{state-of-the-art} mathematical reasoning performance among small reasoning models, scoring 64.5 on AIME24 and 50.2 on AIME25, significantly outperforming similarly sized open-source Transformers. These results highlight SSE as a promising and efficient architecture for long-context modeling.
}
\date{\today}
\begin{document}
\maketitle


\begin{figure}[h]
    \centering
    \vspace{-3mm}
    \begin{subfigure}[t]{0.73\textwidth}
        \centering
        \includegraphics[width=\textwidth]{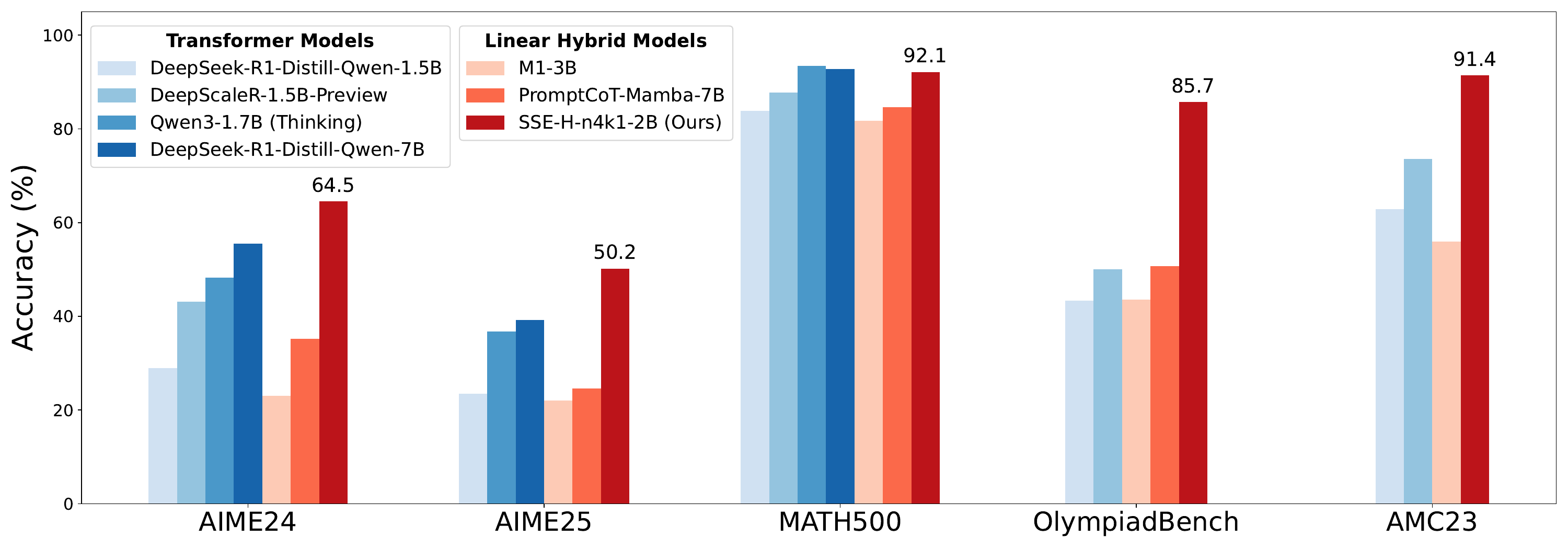}
    \end{subfigure}
    \hfill
    \begin{subfigure}[t]{0.25\textwidth}
        \centering
        \includegraphics[width=\textwidth]{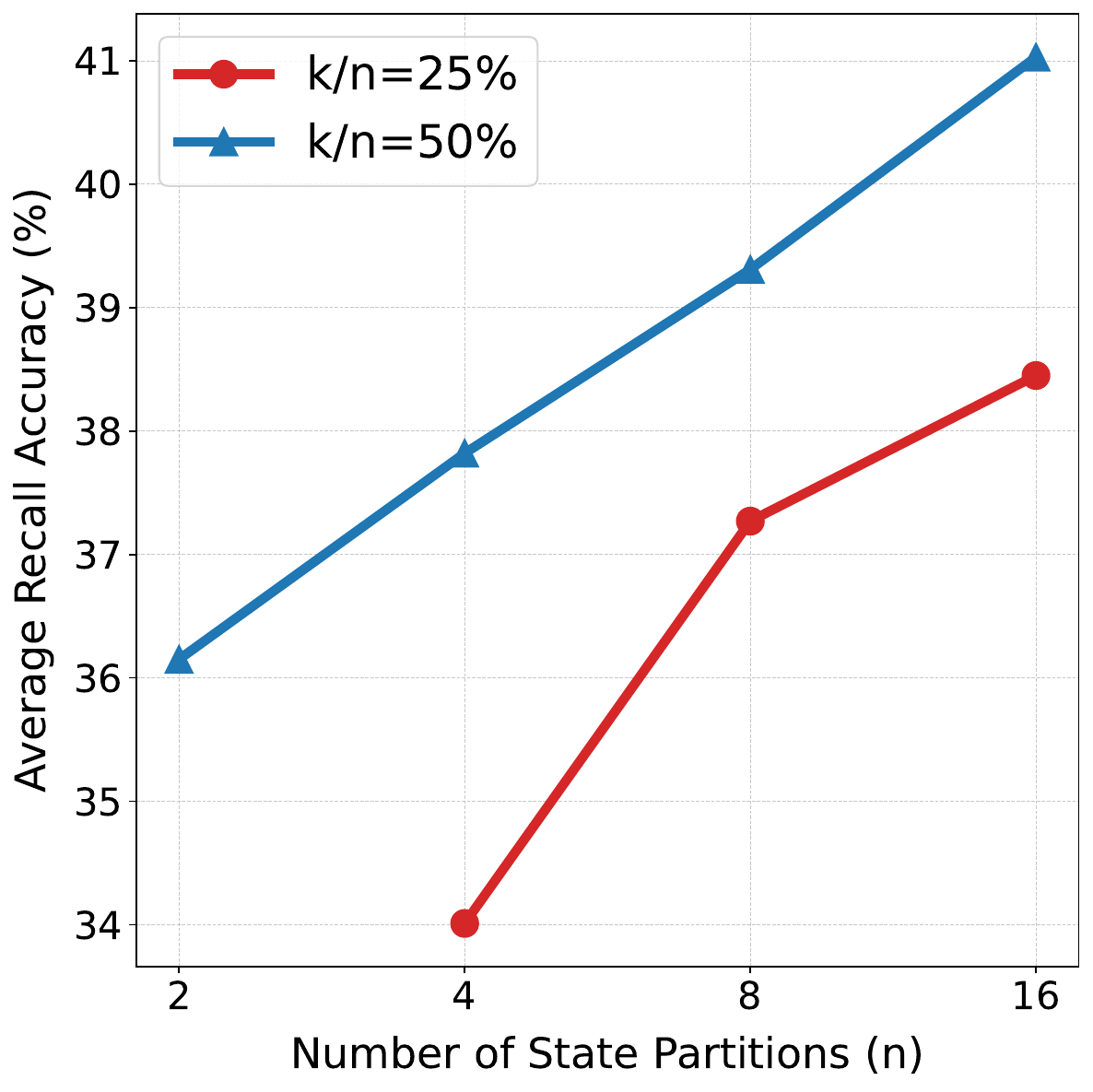}
    \end{subfigure}
    \caption{
        \textbf{Left:} Benchmark performance comparison in mathematical reasoning.
        \textbf{Right:} State scaling performance. $n$ represents the number of expanded state partitions, and $k$ denotes the top-$k$ selection size.
        }
    \label{fig:header}
\end{figure}

\section{Introduction}

The Transformer architecture~\citep{vaswani2017attention} has achieved remarkable success in various sequence modeling tasks, leveraging its attention mechanism for expressive modeling and high parallelism. However, it inherently struggles with processing long sequences due to quadratic computational complexity and linearly growing memory usage of key-value (KV) caches during inference.

To address these limitations, numerous linear attention variants have been proposed~\citep{katharopoulos2020transformers,yang2024gated,dao2024transformers,yang2024gdn}. Such approaches typically achieve sub-quadratic complexity and enable partial parallelism in training, along with RNN-like constant memory usage during decoding. However, most methods compress contextual information into fixed-size state matrices (e.g., $128\times d$), which often compromises performance on tasks such as in-context retrieval and reasoning\footnote{These tasks rely heavily on fine-grained token-level information flow, particularly the retrieval and extraction of relevant context for reasoning. As the proportion of tokens requiring fine-grained interactions increases, contextual compression becomes more challenging. }~\citep{arora2024zoology,arora2024simple,jelassi2024repeat,akyurek2024context}. This raises a key question: how can context compression be designed to balance modeling fidelity and computational efficiency in long-context scenarios?

Motivated by the pursuit of more effective context compression, we propose two key innovations. First, we introduce a row-sparse update formulation for linear attention by conceptualizing contextual state updating as information classification. Specifically, we interpret the key vector mapping $\mathbf{k}_t = f(\mathbf{x}_t, \mathbf{W}_k)$ as a classification function and employ a softmax-based top-$k$ hard classification strategy. At each step, this formulation selectively updates only state rows corresponding to predicted classes. Both theoretical analysis and synthetic experiments validate that our approach achieves larger receptive fields and reduces inter-class interference. Second, to mitigate the memory capacity constraints caused by limited state sizes, we extend the contextual state within the row-sparse update framework and propose Sparse State Expansion (SSE). SSE expands the state into $N$ partitions with shared attention parameters and utilizes a write-read gate for partition selection, followed by a softmax-based selection of state rows within partitions. This design effectively decouples parameter size from state capacity while maintaining the sparse classification paradigm, and empirical results indicate that the states of SSE achieves lower inter-row similarity and higher singular value entropy, which enables more discriminative state representations. We also propose efficient parallelized implementations of SSE, employing masking and the varlen technique optimized for various training contexts.

We extensively validate SSE under both linear and hybrid architectures (denoted as SSE-H), across three core capabilities: language modeling, in-context retrieval, and mathematical reasoning. SSE demonstrates strong language modeling performance among advanced linear models~\citep{yang2024gated,yang2024gdn,du2025mom}, while SSE-H outperforms both hybrid and Transformer baselines. On long-context retrieval tasks, SSE consistently improves upon other linear attention models, and its hybrid variant significantly narrows the gap with softmax attention Transformers. For reasoning, 2B SSE-H model achieves scores of 64.5 on AIME24 and 50.2 on AIME25 with advanced RL training~\citep{shao2024deepseekmath,yu2025dapo}—exceeding the best reported results from similarly sized open-source Transformers (Figure~\ref{fig:header}. Left). Moreover, SSE exhibits strong scalability with respect to state capacity (Figure~\ref{fig:header}. Right), and supports efficient conversion from pretrained Transformers at different scales, demonstrating its flexibility across model architectures and training regimes.

Our main contributions are as follows:
\begin{itemize}
   \item We introduce a \textbf{row-sparse state update framework} that conceptualizes state updating as information classification. This framework enables sparse state updates through softmax-based top-$k$ hard classification, supported by both theoretical and empirical analysis.
    \item Within the proposed framework, we introduce \textbf{Sparse State Expansion (SSE)}, an efficient state expansion mechanism designed to effectively manage parameter count and preserve the classification paradigm. Furthermore, we develop parallelized implementations of SSE, suitable for various training contexts.
    \item We extensively validate SSE and the hybrid SSE-H across diverse training stages and benchmarks. Our 2B hybrid model achieves \textbf{state-of-the-art reasoning performance} among small reasoning models, scoring 64.5 on AIME24 and 50.2 on AIME25—exceeding previous open-source softmax attention Transformers of comparable size.
\end{itemize}
\section{Preliminary}
Autoregressive language models aim to estimate the probability of a language sequence $s$, with the $t$-th token denoted as $\mathbf{x}_t$. This is achieved by modeling the conditional probability $P_{\theta}(\mathbf{x}_t|\mathbf{x}_{<t})$ and optimizing the cross-entropy loss through next-token prediction.
During inference, models generate $\mathbf{x}_t$ by sampling from the learned distribution $P_{\theta}(\mathbf{x}_t|\mathbf{x}_{<t})$, forming the basis of in-context learning (ICL)~\citep{olsson2022context}. Various token-mixers encode historical tokens $\mathbf{x}_{<t}$ into a compact contextual state $\mathbf{S}_t$, facilitating sampling via $P_{\theta}(\mathbf{x}_t|\mathbf{S}_{t-1}) \triangleq P_{\theta}(\mathbf{x}_t|\mathbf{x}_{<t})$\footnote{This formulation often enables more efficient inference. For example, Transformers utilize a KV cache to reduce redundant computations, while RNNs employ a hidden state for recurrent and constant inference cost.}. Consequently, a key distinction among language models lies in how they maintain and utilize the contextual state $\mathbf{S}_t \leftarrow{} \mathbf{x}_{<t}$, which is the central focus of this study.

\textbf{Softmax Attention.} Softmax attention~\citep{vaswani2017attention} stores the key-value vectors derived from historical tokens in a KV cache ($\mathbf{K}_t\in\mathcal{R}^{t\times d}$,$\mathbf{V}_t\in\mathcal{R}^{t\times d}$). As a new token $\mathbf{x}_t$ arrives, the corresponding vectors $\mathbf{k}_t,\mathbf{v}_t$ are appended to this cache. Attention is then computed by evaluating the interaction between the current token’s query $\mathbf{q}_t$ and the KV cache:
\begin{align}
        \mathbf{S}_t &= \{\mathbf{K}_t,\mathbf{V}_t\}, \\
        \mathbf{K}_t &= \begin{bmatrix}
        \mathbf{K}_{t-1}\\ \mathbf{k}_t
    \end{bmatrix},\  \mathbf{V}_t = \begin{bmatrix}
        \mathbf{V}_{t-1}\\ \mathbf{v}_{t}
    \end{bmatrix},\\
    \mathbf{o}_t &= \operatorname{softmax}(\mathbf{q}_{t}\mathbf{K}_t^{\top})\mathbf{V}_t.
\end{align}
This append-based mechanism for state representation and attention naturally enables high parallelism. However, as historical information is not compressed or pruned, it incurs efficiency bottlenecks when processing long sequences. Specifically, both memory and computational costs grow linearly with sequence length at each time step. When all queries are stacked and computed in parallel during training, the computational cost becomes quadratic.

\textbf{Vanilla Linear Attention.} Vanilla linear attention~\citep{katharopoulos2020transformers} compresses historical context into a fixed-size state matrix ($\mathbf{S}_t\in\mathcal{R}^{c\times d}$) using an outer-product update, where $c$ is a predefined constant (e.g., 128) independent of the sequence length:
\begin{align}
        \mathbf{S}_t &=  \mathbf{S}_{t-1}+\mathbf{k}_t^{\top}\mathbf{v}_t,\\
    \mathbf{o}_t &= \mathbf{q}_{t}\mathbf{S}_t.
\end{align}
For long contexts, this aggressive compression yields significantly higher inference efficiency, requiring only constant memory and computational cost per time step. However, when $c \ll t$, the lossy compression causes performance gaps compared to softmax attention, particularly in tasks such as in-context retrieval and reasoning. By reformulating the linear recurrence into a chunk-wise parallel form, linear attention enables hardware-efficient and sub-quadratic training~\citep{sun2023retentive,qin2023scaling,yang2024gated,dao2024transformers}.

Given the growing importance of training and inference efficiency and the inherent redundancy of natural language, effective contextual compression has become a central concern, particularly for long-sequence processing and generation. However, existing linear attention mechanisms have yet to fully exploit this potential. This leads to the central question of this work:
\begin{center}
\textit{How can we obtain a more effective compressed state than vanilla linear attention?}
\end{center}

To address this, our method introduces two key components. First, we derive a novel row-sparse update formulation for linear attention by conceptualizing state updating as information classification. Second, we propose Sparse State Expansion (SSE) for linear attention, designed to efficiently augment state capacity within the row-sparse update framework. The following section will elaborate on these contributions and detail their implementation.

\section{Row-Sparse State Update: An Information Classification Perspective}
Understanding the operational mechanism of contextual states is crucial for developing effective compression methods. To this end, we first present a recurrent outer-product update formulation that provides a unified view of attention modeling~\citep{chou2024metala}:
\begin{align}
    \mathbf{S}_t = \mathbf{\Lambda}_t \mathbf{S}_{t-1}+\phi(\mathbf{k}_t)^{\top}\mathbf{v}_t.\label{eq:unified_recurrent}
\end{align}
Here, $\mathbf{\Lambda}_t$ denotes the state transition matrix and $\phi$ is the feature map. This unified recurrent framework for state updates highlights two orthogonal directions for development:

(1) \textbf{The role of $\mathbf{\Lambda}_t$ in historical information management.} For softmax attention\footnote{For simplicity, we omit the normalization term here.}, $\mathbf{\Lambda}_t = \mathbf{I}$, which implies no inherent temporal prior and necessitates explicit positional embeddings. Conversely, many modern linear attention variants~\citep{sun2023retentive,qin2022devil,peng2023rwkv} incorporate a strong recency bias into $\mathbf{\Lambda}_t$, often implemented via exponentially decaying matrices with values in $(0,1)$. More recent works introduce input-dependent gating matrices~\citep{yang2024gated,gumamba,dao2024transformers} or diagonal-plus-low-rank (DPLR) matrices derived from the delta-rule~\citep{yangparallelizing,yang2024gdn,siems2025deltaproduct} to further enhance the management and transition of historical information\footnote{Gating mechanisms often employ diagonal (or scalar) matrices, $\mathbf{\Lambda}_t=\operatorname{diag}(\bm{\alpha}_t)$, where $\bm{\alpha}_t$ is input-dependent with values in $(0,1)$. DPLR matrices are typically structured as $\mathbf{\Lambda}_t=\operatorname{diag}(\bm{\alpha}_t)+\mathbf{a}_t^{\top}\mathbf{b}_t$}.

(2) \textbf{The role of $\phi(\mathbf{k}_t)$ in new information processing.} After projecting the input $\mathbf{x}_t$ via linear transformations, linear attention typically applies a feature map $\phi$ to embed it into a finite-dimensional space. Early designs of $\phi$ sought to approximate the exponential kernel of softmax attention~\citep{katharopoulos2020transformers,choromanskirethinking,peng2021random,zhanghedgehog}. Conversely, more recent variants commonly adopt identity mapping or simple activation functions like ReLU or SiLU~\citep{sun2023retentive,yang2024gated,chou2024metala,yangparallelizing,yang2024gdn}.

While the first direction, concerning structured matrices for more effective state transition, exhibits a clear trend, the second direction—identifying a more effective key vector mapping for enhanced state updating—remains unclear. In this section, we focus on the second direction, aiming to design a more effective $\phi(\mathbf{k}_t)$. By conceptualizing contextual state updating as information classification, we propose a novel row-sparse update formulation for linear attention. This is implemented via a top-$k$-then-softmax strategy, performing hard classification. Notably, our analysis is agnostic to the specific design of $\mathbf{\Lambda}_t$ and seamlessly integrates with various state transition mechanisms.

\begin{figure}
    \centering
    \includegraphics[width=\textwidth]{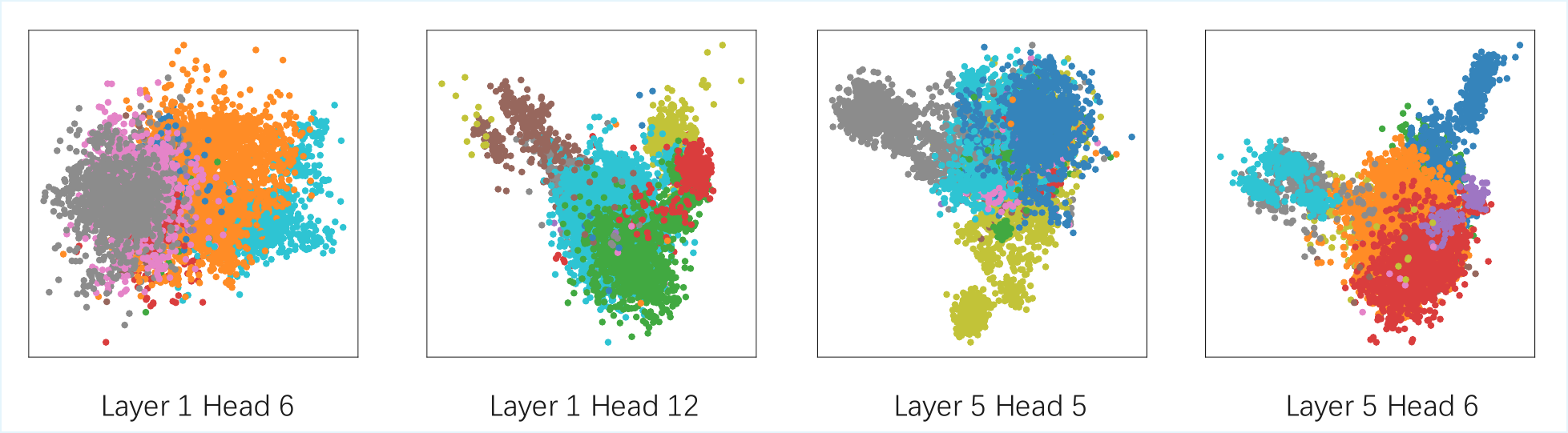}
    \caption{\textbf{Clustering of information within linear attention state rows.} We observe that learned state representations reveal clear clustering patterns. Specifically, we assign each token's value vector (represented as a point) to a specific state row (indicated by color) by taking the maximum activation over its corresponding key vector. This assignment demonstrates that information within the same row tends to share similar feature representations.}
    \label{fig:info_cluster}
\end{figure}

\subsection{Conceptualizing the State Updating as Information Classification}
In Equation~\ref{eq:unified_recurrent}, we conceptualize $\mathbf{k}_t = f(\mathbf{x}_t, \mathbf{W}_k)$ as a classification function\footnote{For clarity, $\mathbf{k}_t$ denotes the vector after the $\phi$ mapping throughout the remainder of this paper.}. Specifically, $f(\mathbf{x}_t, \mathbf{W}_k)$ classifies the input $\mathbf{x}_t$, and the classification results determine how information is distributed across different state slots (i.e., rows). A higher value of $\mathbf{k}_t$ for a given class corresponds to more information being assigned to its respective row. Consequently, each row of the state represents a distinct feature subspace, storing information associated with similar classification decisions. To enhance expressiveness, the stored information is typically computed as $\mathbf{v}_t = \mathbf{x}_t \mathbf{W}_v$ in practice. For instance, using a linear attention module with a linear classifier ($\mathbf{k}_t = \mathbf{x}_t\mathbf{W}_k$), we theoretically demonstrate that information assigned to the same state row exhibits similar features (Appendix~\ref{app:theory}, Proposition~\ref{prop:class}). To validate this, we analyze the information assignments of linear attention using a top-1 row-selection strategy. As Figure~\ref{fig:info_cluster} illustrates, the information composition of state rows exhibits a clustering pattern, further supporting our proposed classification framework. However, vanilla linear attention often suffers from inter-class mixing due to its lack of this explicit  hard assignment.

This perspective allows us to reinterpret various designs of $f(\mathbf{x}_t, \mathbf{W}_k)$. Initially, in modern linear attention models~\citep{sun2023retentive,yang2024gated,chou2024metala}, $\mathbf{k}_t$ was typically derived through a single linear transformation, specifically $f(\mathbf{x}_t, \mathbf{W}_k) = \mathbf{x}_t\mathbf{W}_k$, thereby serving as a linear classifier. More recent models~\citep{yangparallelizing,yang2024gdn} incorporate simple nonlinear activations, yielding more expressive nonlinear classifiers like $f(\mathbf{x}_t, \mathbf{W}_k)=\operatorname{SiLU}(\mathbf{x}_t\mathbf{W}_k)$. Moreover, more sophisticated classification architectures, such as gating mechanisms~\citep{gumamba,dao2024transformers,beckxlstm} or MLPs~\citep{zhanghedgehog,kasai2021finetuning}, are also employed. Beyond these examples, viewing state construction as an information classification process enables novel architectural designs, such as the softmax-based hard classification we detail in the next section.

\begin{figure}
    \centering
    \includegraphics[width=\textwidth]{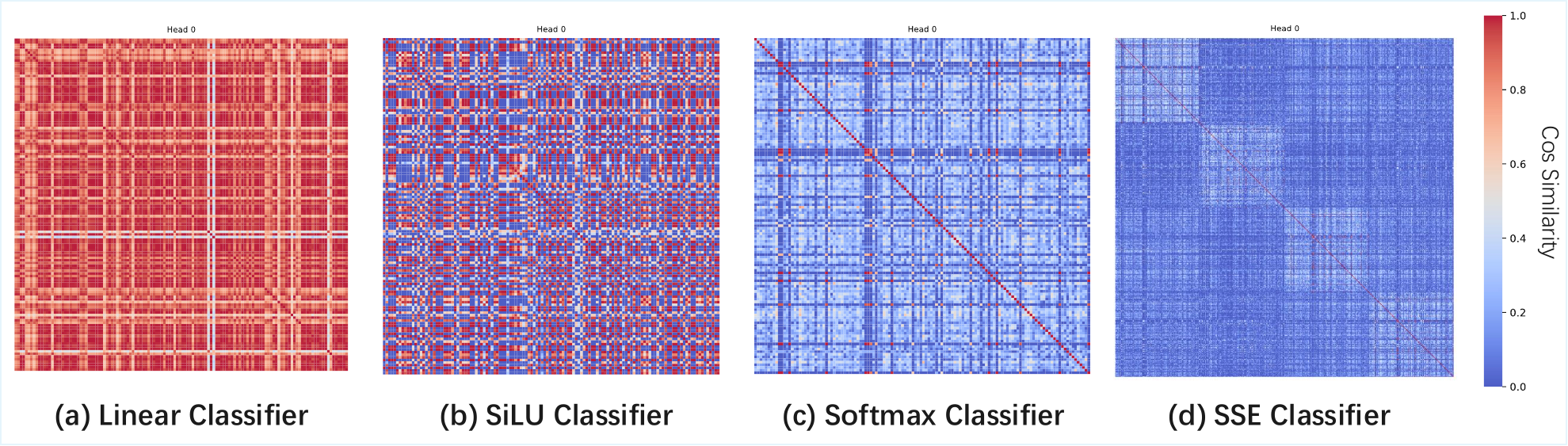}
    \caption{\textbf{Row-wise cosine similarity of contextual states in linear attention models with varying classifier designs.} The figure presents the $128 \times 128$ similarity matrices between state rows, where darker blue indicates lower similarity, reflecting more effective classification.}
    \label{fig:state_sim}
\end{figure}

To evaluate the effectiveness of different classification strategies, we analyze the row-wise cosine similarity of the contextual states across various models. As Figure~\ref{fig:state_sim}(a)(b) illustrates, the results highlight the advantages of more expressive classification functions while simultaneously revealing several limitations of existing linear attention mechanisms. Using a linear classifier as an illustrative example, we theoretically demonstrate the potential drawbacks of storing information from different classes in the same state rows within current linear attention designs (Appendix~\ref{app:theory}, Propositions~\ref{prop:noise1}-\ref{prop:forget}):
(1) Writing information to all rows at each time step introduces noise and interference from unrelated classes into the historical memory, thereby blurring the stored representations.
(2) In gated variants of linear attention, applying uniform decay to all state rows at each time step, driven by a strong recency bias, progressively leads to the forgetting of previously stored information. These issues stem from implicitly incorporating the notion of information classification without fully leveraging the resulting class assignments for compressed storage. These observations motivate the question: \textit{How can the contextual state in linear attention be more effectively updated and maintained from an information classification perspective?}

\subsection{Row-Sparse Update via Softmax-Based Hard Classification}
We posit that the aforementioned issues primarily stem from suboptimal designs of the classification function $f(\mathbf{x}_t, \mathbf{W}_k)$. From a classification standpoint, a natural and widely adopted choice is the softmax classification head, which takes the form $f(\mathbf{x}_t, \mathbf{W}_k)=\operatorname{softmax}(\mathbf{x}_t\mathbf{W}_k)$. 
Furthermore, our theoretical analysis (Appendix~\ref{app:theory}) indicates that updating only a subset of state rows based on class assignments improves the utilization of limited state capacity. This objective can be naturally achieved through top-$k$ hard classification. Our synthetic MQAR~\citep{arora2024zoology} experiment further demonstrates the benefits of top-$k$ updates for information recall (see Figure~\ref{fig:mqa_topk}). Combining these two insights, we propose a \textbf{row-sparse update formulation for linear attention}:
\begin{align}
    \mathbf{k}_t &= \operatorname{softmax}(\operatorname{top\text{-}}k(\mathbf{x}_t\mathbf{W}_k)),\label{eq:sec3_softmax}\\
    \mathbf{S}_t &= \mathbf{\Lambda}_t \mathbf{S}_{t-1}+\mathbf{k}_t^{\top}\mathbf{v}_t.
\end{align}
Softmax-based top-$k$ hard classification ensures that each contextual state row stores similar information, resulting in higher inter-row discriminability and more precise information organization. Additionally, each element undergoes fewer decay operations, effectively extending the receptive field over longer contexts. This, in turn, improves the model’s retrieval performance. As Figure~\ref{fig:state_sim}(c) illustrates, the low row-wise similarity observed in the states of softmax-based hard classifiers validates the effectiveness of our proposed framework.

To further enhance the stability of this approach and mitigate class imbalance, we promote a more uniform distribution of samples across state rows. Such balance stabilizes training and facilitates better parameter fitting throughout the state matrix. This can be enforced either via an auxiliary loss~\citep{fedus2022switch,lepikhingshard} promoting uniform row usage or through auxiliary-loss-free strategies~\citep{wang2024auxiliary}. In our implementation, we employ the auxiliary loss $\mathcal{L}_{balance}=\alpha\frac{c}{k}\sum_{i=1}^cf^i\cdot\mathbf{k}_{t}^i$, where $\alpha$ is a manually specified coefficient, $c$ is the number of state rows, and $f^i$ denotes the selection frequency of the $i$-th row\footnote{For SSE with expanded partitions, we use a partition-based loss instead of a row-based one.}. This objective, grounded in the class balance assumption, complements the sparsity induced by top-$k$ hard classification and enables joint optimization during training.

\section{SSE: State Expansion under the Row-Sparse Framework}
The most critical gap between linear attention and softmax attention is their performance on long-context retrieval and reasoning tasks, which has become a central focus in recent research. This limitation primarily arises from the restricted memory capacity caused by their small state size. For instance, under typical configurations (e.g., $c=128$), the memory budget of linear attention is roughly equivalent to that of softmax attention with a context window of only 64 tokens. Therefore, increasing the contextual state size in linear models is a key direction for addressing their current shortcomings. Recent works have explored this issue from various perspectives~\citep{arora2024simple,du2025mom,guo2025log,zhang2025test}. Building on this foundation, we pose the following question: \textit{How can state expansion be designed from the perspectives of information classification and row sparsity?}

Before addressing this question, we first consider a preliminary issue: the relationship between parameter size and state size. These represent two distinct yet potentially coupled dimensions of model scaling. In this work, we make the simplifying assumption of fixing the parameter size while independently examining the effect of state size expansion. This assumption is motivated by the observation that softmax attention employs $4d^2$ attention parameters to manage a state of size $2 \times t \times d$, yet still exhibits strong in-context learning capabilities when $t \gg c$. This suggests that, for the sequence lengths considered in this study, the existing parameter capacity is sufficient to manage compressed states effectively, obviating the need for increased parameters to fit smaller states. Therefore, as we investigate the expansion of linear attention from a baseline state of size $c \times d$, our goal is to isolate and analyze the benefits of state expansion without the confounding effects of increased parameter count.

\subsection{Sparse State Expansion via Information Classification}
\begin{figure}
    \centering
    \includegraphics[width=0.75\textwidth]{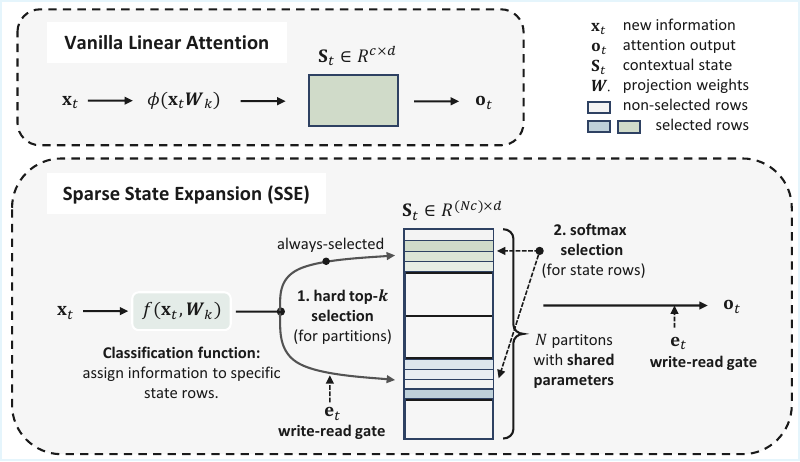}
\caption{\textbf{Comparison between vanilla linear attention and SSE.} SSE expands the state into $N$ partitions within the row-sparse update framework, where a classification function assigns information to specific state rows. All partitions share attention parameters. Sparse row selection follows two steps: (1) top-$k$ partition selection based on a write-read gate (blue indicates selected partitions; green marks an always-selected partition for training stability), and (2) row selection within the chosen partitions via softmax over key vectors.}
\label{fig:sse_arch}
\end{figure}
Within the information classification framework, a straightforward strategy involves expanding the number of classes by a factor of $N$, resulting in $N \times c$ total state rows organized into $N$ partitions. For each input token $\mathbf{x}_t$, we utilize the top-$k$-then-softmax mechanism to select a subset of rows from the chosen partitions. Building on our prior analysis, this design is driven by two key objectives:
(1) scaling the state size independently of parameter count, and
(2) preserving the sparse classification paradigm by explicitly performing information classification. For efficiency, we avoid applying top-$k$ directly across all state rows (Equation~\ref{eq:sec3_softmax}). Instead, we first perform hard top-$k$ selection over partitions, followed by soft row selection within the chosen partitions using softmax. Accordingly, our core design consists of the following components:

\textbf{Shared attention parameters across partitions.} We use a single set of projection weights (QKV) across all partitions. This design choice is equivalent to performing segmented clustering of input information along the sequence dimension, thereby enabling sparse token-level interactions.

\textbf{Write-read gate for hard partition selection, followed by soft row selection via softmax.}
A gate vector $\mathbf{e}_t \in \mathcal{R}^N$ is first used to select the top-$k$ partitions, and within these partitions $\operatorname{softmax}(\mathbf{k}_t)$ further selects the state rows. Only the selected partitions are updated, and ablation results show that the softmax step yields over 40\% row sparsity. Furthermore, the gate is applied to both state input (KV) and output (Q), thereby governing both information writing and reading.

We refer to the linear attention variant that integrates the two core components described above as \textbf{Sparse State Expansion (SSE)}, illustrated in Figure~\ref{fig:sse_arch}. The complete computational procedure is as follows:
\begin{align}
\mathbf{e}_t &= \operatorname{softmax}(\mathbf{x}_t\mathbf{W}_e),\label{eq:eta_proj}\\
\mathcal{T} &= \{i \ | \ \mathbf{e}_t^i \in \operatorname{top\text{-}}k(\mathbf{e}_t)\},\label{eq:sparse_index}\\
\mathbf{q}_t&=\mathbf{x}_t\mathbf{W}_q, \ \mathbf{v}_t =\mathbf{x}_t\mathbf{W}_v,\\
\mathbf{k}_t&=\operatorname{softmax}(\mathbf{x}_t\mathbf{W}_k),\label{eq:k_proj}\\
\mathbf{S}_{t}^i&=\begin{cases}
    \mathbf{\Lambda}_t \mathbf{S}_{t-1}^{i}+\mathbf{e}_t^i\cdot\mathbf{k}_t^{\top}\mathbf{v}_t, & \text{for}\ i\in \mathcal{T}\\
    \mathbf{S}_{t-1}^{i}, & \text{for}\ i\notin \mathcal{T}\\
\end{cases}\label{eq:state_update}\\
\mathbf{o}_t &= \sum_{i\in\mathcal{T}}\mathbf{e}_t^i\cdot\mathbf{q}_t\mathbf{S}_t^i.\label{eq:state_output}
\end{align}

(1) We compute a gate vector $\mathbf{e}_t \in \mathcal{R}^N$ through a linear transformation of the input followed by softmax (Equation~\ref{eq:eta_proj}), and select the top-$k$ partitions based on $\mathbf{e}_t$ (Equation~\ref{eq:sparse_index}). (2) To preserve the unified classification behavior, we apply softmax over all state rows of the selected partitions. Since all partitions share the same projection matrix $\mathbf{W}_k$, applying softmax over the $k \times c$ selected rows is equivalent to applying it independently within each partition (Equation~\ref{eq:k_proj}); the scaling factor can be absorbed into the parameters and thus omitted. (3) To ensure the gate remains trainable, we incorporate $\mathbf{e}_t$ into both state input (KV) and output (Q) (Equations~\ref{eq:state_update}–\ref{eq:state_output}), thereby controlling both writing and reading and enabling more targeted optimization. (4) For a selected partition $i$, the gated key vector is $\mathbf{k}_t^i = \mathbf{e}_t^i\cdot \operatorname{softmax}(\mathbf{x}_t\mathbf{W}_k)$ (Equation~\ref{eq:state_update}). By the normalization property of softmax, the gate can be embedded into the classification function while preserving consistent behavior, with classification intensity $\sum_i \sum_d (\mathbf{k}_t^i)_d = 1$. Only the selected partitions $\mathbf{S}_t^i$ are updated.  (5) Beyond state updating, we also perform selective reading based on information similarity. Within the write–read gating framework, we restrict reading to the top-$k'$ partitions using the same gate $\mathbf{e}_t$ (Equation~\ref{eq:state_output}): 
$\mathbf{o}_t = \sum_{i \in \mathcal{T'}} \mathbf{e}_t^i \cdot \mathbf{q}_t \mathbf{S}_t^i$, where by default $\mathcal{T'} = \mathcal{T}$.

\begin{wrapfigure}{r}{0.4\textwidth}
    \centering
    \includegraphics[width=0.35\textwidth]{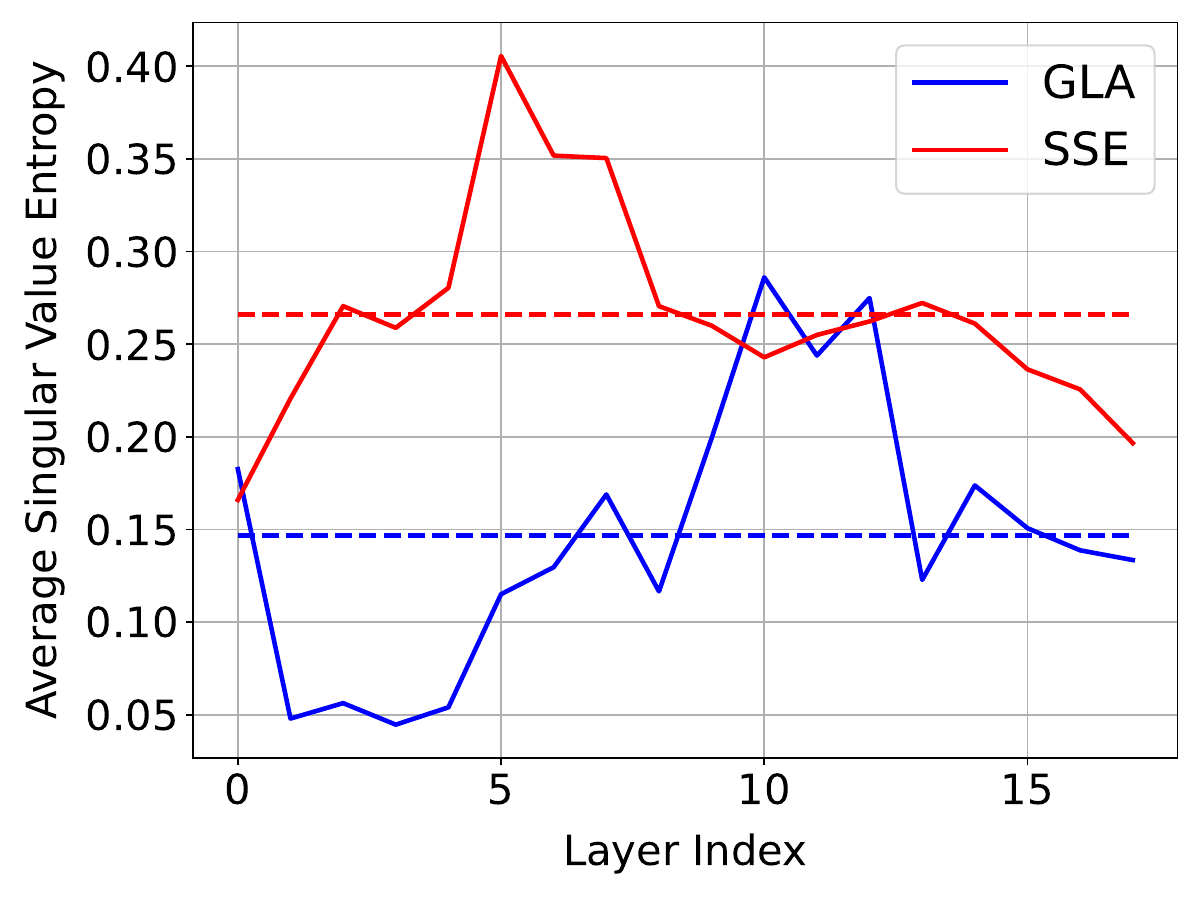}
    \caption{\textbf{Singular value entropy of the contextual states.} SSE with diagonal gating exhibits higher singular value entropy than GLA, indicating a less compressible and more effectively utilized state composition.}
    \label{fig:state_entropy}
\end{wrapfigure}
Local interactions between consecutive tokens serve as an effective prior for language modeling~\citep{fu2022hungry,peng2023rwkv,arora2024simple}, which we leverage to stabilize the sparse component's training. To this end, we incorporate an always-selected partition and employ a LoRA strategy to maintain a nearly constant parameter count\footnote{LoRA is applied only to the QK projections, while V projections are shared across all partitions, including the always-selected one.}. Leveraging parameter sharing and the sparsity induced by hard classification, SSE enables scaling up the state size of linear attention while keeping both computational and parameter overhead nearly constant.

Empirical analysis reveals that SSE's state partitions exhibit low inter-partition similarity; furthermore, state rows within each partition remain diverse, as illustrated in Figure~\ref{fig:state_sim}(d). Overall, the state exhibits a diagonal pattern, indicating low cross-row similarity and effective information classification. Furthermore, as Figure~\ref{fig:state_entropy} shows, the singular value entropy\footnote{The singular value entropy is computed by $H=-\frac{1}{\log n}\sum_{i=1}^n\frac{\sigma_i^2}{\sum_{j=1}^n\sigma_j^2}\log \frac{\sigma_i^2}{\sum_{j=1}^n\sigma_j^2}$, where $\sigma_i$ (for $i \in \{1,\dots, n\}$) are the singular values of a contextual state matrix. We average $H$ across all attention heads for a specific layer.}~\citep{roy2007effective} of the SSE states is higher than that of GLA~\citep{yang2024gated}. This indicates a more diverse and less compressible state composition, suggesting a more efficient utilization of state capacity in SSE. In addition, we analyze the receptive field of SSE and observe that it is significantly larger than that of GLA, indicating improved long-range information access (see Appendix~\ref{app:exp_details} for details).

\subsection{Efficient Implementations of SSE}
SSE expands the state into multiple partitions, each containing distinct token subsets. During operator execution, we prioritize maintaining parallelism across partitions, avoiding sequential computation. Concurrently, we leverage sparsity to minimize unnecessary computational overhead. This section first describes a naive implementation suitable for variable-length short-sequence settings. Subsequently, we introduce the optimized varlen implementation designed for large-scale and long-sequence scenarios. Finally, we present a straightforward fusion strategy for managing shared partitions. Additional implementation details and pseudocode for both the masking and varlen implementations are provided in Appendix~\ref{app:ops_pseudocode}.

\textbf{Naive implementation via masking.} Variable-length pretraining often results in numerous short sequences within the corpus. This consequently leads to very small per-partition lengths ($\sim L/N$) for SSE. In such scenarios, grouping tokens by their selected partitions and processing them with new cu\_seqlens can incur significant computational overhead. This is because chunk-wise linear attention operators, typically optimized for chunk sizes that are powers of two, become inefficient when $L/N$ falls below the optimal sizes (e.g., 64 or 128), or even below minimum thresholds (e.g., 16). Given these relatively short and variable sequence lengths, we choose not to physically separate tokens. Instead, we handle partitions by increasing parallelism through replication and applying masking, thereby enabling the use of larger chunk sizes.  Specifically, activations are repeated $N$ times, and the top-$k$ indices derived from $\mathbf{e}_t$ are utilized to mask the QKV vectors to zero, ensuring each state partition attends only to its assigned tokens. During operator execution, the additional dimension introduced by replication can be merged into either the batch or head dimension for efficient parallel processing.

\textbf{Efficient implementation via varlen technique.} During the long-context continual training phase, sequence lengths are typically longer and more uniformly distributed. This enables chunk-wise computation to operate only on relevant tokens within each partition, thereby avoiding the redundant computations over masked tokens inherent in the naive implementation. Specifically, we first use the top-$k$ indices derived from $\mathbf{e}_t$ to reorder the QKV vectors, grouping tokens sequentially by partition (from 1 to $N$) within each sample. We then compute a new cu\_seqlens parameter based on these reordered sequences and their corresponding partition assignments. At this point, each resulting subsequence directly corresponds to a specific partition of a given sample. With this reordering and the updated cu\_seqlens, all partitions can be processed in parallel without introducing additional computational overhead.

\textbf{Fusing shared partitions.} Always-selected shared partitions can be fused through concatenation to avoid multiple sequential calls to the linear attention operator. In the naive implementation, shared partitions are concatenated along the head dimension, and no masking is applied to the shared portion. In the varlen implementation, shared partitions are concatenated with the reordered sequence along the sequence dimension, while segment-wise computation remains controlled via the cu\_seqlens parameter. These modifications allow the linear attention operator to be invoked only once.

In the inference setting, the chunk-wise implementation described above applies directly to the prefilling stage. During decoding, we leverage sparse indices to perform recurrent computation solely on the selected partitions, significantly reducing computational overhead.

We evaluate the runtime performance of different SSE attention implementations, including sequential kernel invocation, naive masking-based implementation, and varlen-based implementation, across input sequence lengths from 8 to 256k tokens. As Figure~\ref{fig:ops_speed} illustrates, the masking-based approach offers competitive efficiency for short sequences ($\leq$1k), whereas the varlen implementation is more efficient for longer sequences. Furthermore, the varlen technique enables SSE to exhibit favorable scalability with respect to state size $N$, maintaining nearly constant runtime when $K$ is fixed.

\begin{figure}
    \centering
    \includegraphics[width=\textwidth]{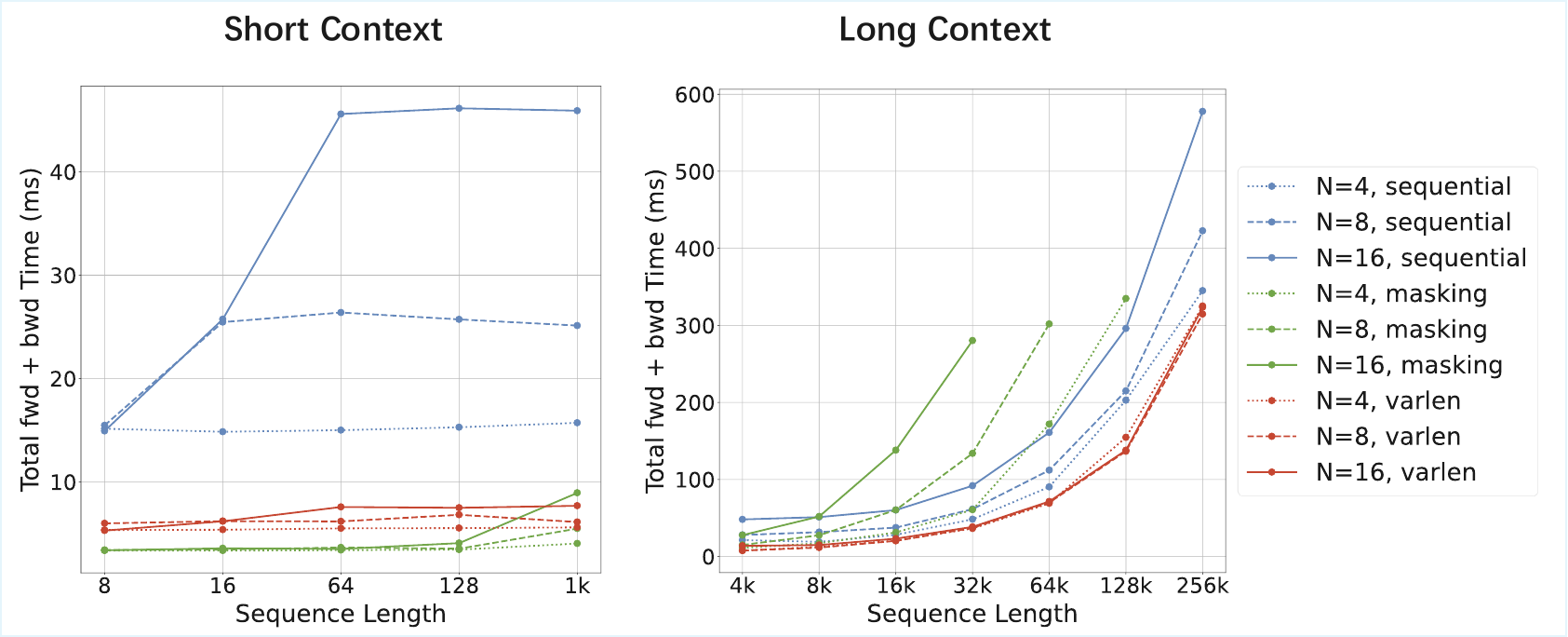}
\caption{\textbf{Speed comparison of different SSE implementations.} The number of selected partitions is fixed at $K = 1$, while the total number of partitions $N$ is varied. We set attention heads to 8, head dimension to 128, and cu\_seqlens to $[0, L/2, L]$. All evaluations are conducted on an A100 GPU.
}
    \label{fig:ops_speed}
\end{figure}

\section{Experiments}
We conduct a comprehensive evaluation of SSE across a wide range of benchmarks, spanning language modeling, in-context retrieval, and reasoning. Across these tasks, SSE consistently delivers stronger performance than existing linear attention models, particularly in in-context retrieval settings. Furthermore, when integrated into a hybrid architecture, SSE achieves overall performance on par with Transformer models. Our evaluation covers the entire training pipeline, including pretraining, long-context extension, distillation, reinforcement learning, and up-training from pretrained Transformer checkpoints. Notably, the hybrid variant of SSE demonstrates reasoning capabilities comparable to state-of-the-art Transformers when trained under the same regimes.

\textbf{Models.} We build all models on the MHA-SwiGLU architecture~\citep{touvron2023llama}, varying only the attention mechanism to ensure fair comparisons. We compare SSE against standard Transformer (using softmax attention) and several representative linear attention baselines, such as GLA~\citep{yang2024gated}, GDN~\citep{yang2024gdn}, and MoM~\citep{du2025mom}. Both SSE and MoM adopt GLA-style state transition mechanisms, with $n$ denoting the state expansion factor (i.e., the number of state partitions) and $k$ representing the top-$k$ hard selection size per token. The sparsity ratio is defined as $k/n$. To ensure a fair comparison, all models exclude convolutional layers. Additionally, we also evaluate layer-wise hybrid architectures, constructed by adding one softmax attention layer after every five linear attention layers\footnote{In our 2B setting, this results in 3 softmax attention layers out of 18 total layers.}. These models are denoted by the "-H" suffix (e.g., SSE-H).

\textbf{Training Setup.} We conducted our primary experiments with two model scales: 600M and 2B parameters\footnote{The non-embedding parameters are 300M and 1.3B, respectively.}. During pretraining, all models are trained with AdamW, using a maximum sequence length of 8192 and a total batch size of 4M tokens. The learning rate schedule includes a linear warm-up phase followed by a constant learning rate of 6e-4. We apply a weight decay of 0.1 and a gradient clipping of 1.0. Following pre-training, we perform context length extension up to 32k tokens, using cosine learning rate decay. Subsequently, we perform a reasoning-oriented data distillation stage using a fine-tuning set of approximately 80k examples, trained for 5 epochs. Finally, we apply reinforcement learning with GRPO~\citep{shao2024deepseekmath} for 230 steps. During RL training, we sample 8 responses per prompt with a generation limit of 32k tokens.

\begin{table*}[t]
\centering
\small
\setlength{\tabcolsep}{4pt}
\centering
\caption{\textbf{Zero-shot language modeling and recall performance of pretrained models.} For SSE and MoM, $n$ and $k$ (e.g., n4k1) indicate the expansion ratio and sparsity ratio, respectively. Both SSE and MoM employ diagonal gating, similar to GLA. MoM is evaluated without token truncation to establish a stronger baseline.}
\label{tab:language_modeling}{
\resizebox{\textwidth}{!}{
\begin{tabular}{l|c|ccccccc|cccc}
\toprule
\multirow{2}{*}{\textbf{Model}} & & \multicolumn{7}{c}{\textbf{CommonSense Reasoning}} & \multicolumn{4}{c}{\textbf{Real-world Recall}}\\
\cmidrule(lr){3-9} \cmidrule(lr){10-13} 
& Wiki.$\downarrow$ & PIQA & Hella. & Wino. & ARC-e & ARC-c & SIQA & \textbf{Avg.} & FDA & SWDE & SQuAD & \textbf{Avg.} \\
\midrule
\multicolumn{13}{c}{600M params, 15B tokens} \\
\midrule
\rowcolor{gray!12}
Transformer & 33.47 & 61.92 & 31.60 & 51.38 & 48.02 & 23.12 & 37.31 & 42.22 & 74.50 & 60.67 & 32.67 & 55.95 \\
GLA & 43.97 & 61.21 & 29.61 & \underline{51.46} & 47.01 & 23.29 & 36.59 & 41.53 & 9.44 & 23.40 & 23.06 & 18.63 \\
GDN & \textbf{35.26} & \underline{62.79} & \textbf{32.16} & \textbf{51.85} & \textbf{50.46} & 24.23 & 36.80 & \textbf{43.05} & 14.79 & \underline{33.48} & 26.24 & 24.84 \\
\textbf{SSE-n4k1} & 35.52 & 61.59 &\underline{31.71}& \underline{51.62}& 49.45 &\underline{24.74}& \textbf{38.33}& 42.91& \textbf{33.67} &33.30&  \underline{26.51}& \underline{31.16} \\
\textbf{SSE-n4k2} & \underline{35.33} & \textbf{62.95}& 31.64 &50.43 &\underline{49.66} &\textbf{25.34} &\underline{38.28} &\textbf{43.05} &  \underline{30.85}& \textbf{36.90}&  \textbf{27.82}& \textbf{31.86} \\
\midrule
\multicolumn{13}{c}{800M (600M active) params, 15B tokens} \\
\midrule
MoM-n4k1 & 36.05 & 62.57 & 31.64 & 51.70 & 48.53 & 23.46 & 37.92 & 42.64 & 18.97 & 36.36 & 27.75 & 27.69 \\
MoM-n4k2 & 35.11 & 63.87 & 32.47 & 49.64 & 49.54 & 23.81 & 37.36 & 42.78 & 21.05 & 37.08 & 26.68 & 28.27 \\
\midrule
\multicolumn{13}{c}{2B params, 100B tokens} \\
\midrule
\rowcolor{gray!12}
Transformer & 16.46 & 71.82& 52.70&  58.80&  63.76& 31.31& 42.89& 53.55 & 86.48 & 82.99 & 49.53 & 73.00 \\
GLA & 21.30 & 68.44& 42.82& 54.38& 59.43& 28.24& 41.45& 49.13& 51.27 & 59.86 & 36.73 & 49.29 \\
GDN & \underline{17.08} & \textbf{72.25}& \underline{53.20}&  \underline{57.70} & \underline{65.61} &\underline{32.68} &\textbf{42.89} &\underline{54.05} & \underline{54.63} & \underline{66.88} & \underline{39.88} & \underline{53.79} \\
\textbf{SSE-n4k1} & \textbf{16.92} & \underline{71.98}& \textbf{53.62} &\textbf{60.14} &\textbf{66.67}& \textbf{32.94}& \underline{42.07}& \textbf{54.57} & \textbf{71.23} & \textbf{69.94} & \textbf{43.20} & \textbf{61.46}\\
\midrule
GLA-H & 17.27 & 70.95 &51.58 &57.30 & 65.03 &31.31 &42.73 &53.15 & 78.22 & 78.49 & 45.84 & 67.52 \\
\textbf{SSE-H-n4k1} & \textbf{16.47} & \textbf{71.93} & \textbf{53.65} & \textbf{60.38} & \textbf{65.28} & \textbf{31.91} & \textbf{43.71} &\textbf{54.48}& \textbf{84.48} & \textbf{80.65} & \textbf{47.49} & \textbf{70.87} \\
\bottomrule
\end{tabular}}
}
\end{table*}

\subsection{Language Modeling and Retrieval}

\textbf{Language Modeling.} We conduct small-scale pretraining to compare different attention mechanisms, using 600M and 2B models trained on 15B and 100B tokens, respectively. Subsequently, we evaluate these models on zero-shot commonsense reasoning performance. As shown in Table~\ref{tab:language_modeling}, SSE-n4 demonstrates strong results among advanced linear attention models, outperforming Transformers and matching MoM despite using fewer parameters (600M vs. 800M). Furthermore, in a hybrid setting, SSE-H also surpasses both GLA-H and Transformers at the 2B scale, reaching state-of-the-art performance.

\textbf{In-context Retrieval.} Retrieval is a long-standing bottleneck for linear attention models due to their limited capacity for precise token-level recall. SSE addresses this challenge through a combination of sparse state expansion and row-sparse updates, enabling more expressive and efficient memory access. As Table~\ref{tab:language_modeling} illustrates, SSE consistently outperforms other linear attention models on real-world retrieval-intensive tasks, narrowing the gap with the Transformer baseline. At both 600M and 2B scales, SSE-n4k1 significantly improves recall accuracy over GLA and GDN on all tasks, despite GDN’s use of an explicit erasure mechanism. While pure linear variants still lag behind Transformer, the hybrid SSE-H further boosts retrieval performance, surpassing GLA-H and closely approaching Transformer-level accuracy.

We also benchmark our 2B models on synthetic S-NIAH tasks within the RULER~\citep{hsiehruler} suite. As shown in Table~\ref{tab:ruler}, SSE demonstrates superior retrieval capabilities under controlled conditions, further validating the benefits of its architectural design.

\textbf{Data Scaling and Length Extension.} We further scale both the 2B SSE-H and Transformer baseline models to 2T pretraining tokens, followed by a 32k context extension phase using an additional 250B tokens. Subsequently, we evaluate these models across a broad set of benchmarks covering knowledge-intensive, reasoning-oriented, and retrieval-based tasks. As presented in Table~\ref{tab:d8_all}, SSE-H remains competitive with the Transformer baseline across all task categories, while achieving higher average accuracy overall. In particular, SSE-H outperforms the Transformer on several challenging benchmarks, including MMLU, MMLU-Pro, and C-Eval, which are widely used to assess general knowledge and multilingual reasoning ability. We also report results on six single- and multi-needle tasks up to 32k from RULER, where SSE-H shows strong long-context retrieval performance (Appendix~\ref{app:exp_details}, Table~\ref{tab:ruler_32k}). These results highlight the robustness and scalability of our hybrid architecture under large-scale training and long-context settings.

\begin{table*}[t]
\centering
\small
\centering
\caption{\textbf{Performance comparison on Single-NIAH tasks in RULER.} All models have 2B parameters and are trained on 100B tokens using 8k variable-length sequences. Results are reported in a zero-shot setting.}
\label{tab:ruler}{
\begin{tabular}{l|ccc|ccc|ccc}
\toprule
\multirow{2}{*}{\textbf{Model}} & \multicolumn{3}{c}{\textbf{S-NIAH-1}} & \multicolumn{3}{c}{\textbf{S-NIAH-2}} & \multicolumn{3}{c}{\textbf{S-NIAH-3}} \\
\cmidrule(lr){2-4} \cmidrule(lr){5-7} \cmidrule(lr){8-10} 
& 2K & 4K & 8K & 2K & 4K & 8K & 2K & 4K & 8K \\
\midrule
Transformer & 100.0 & 100.0 & 100.0 & 100.0 & 100.0 & 100.0 & 100.0 & 100.0 & 100.0 \\
\midrule
GLA & 100.0 & 100.0 & 87.6 & 100.0 & 81.8 & 23.2 & 88.0 & 62.2 & 16.2\\
GDN & 100.0 & 100.0 & 100.0 & 98.8 & 62.4 & 8.2 & 96.2 & 63.8 & 9.0 \\
\textbf{SSE-n4k1} & 100.0 & 100.0 & 100.0 & 100.0 & 99.6 & 85.2 & 100.0 & 88.2 & 62.2 \\
\midrule
GLA-H & 100.0 & 100.0 & 100.0 & 100.0 & 100.0 & 100.0 & 98.8 & 96.4 & 70.8 \\
\textbf{SSE-H-n4k1} & 100.0 & 100.0 & 100.0 & 100.0 & 100.0 & 100.0 & 100.0 & 99.4 & 97.4 \\
\bottomrule
\end{tabular}}
\end{table*}

\begin{table*}[h]
\centering
\small
\setlength{\tabcolsep}{4pt}
\centering
\caption{\textbf{Benchmarking of 2B SSE-H and Transformer baseline after long-context extension.}}
\label{tab:d8_all}{
\resizebox{\textwidth}{!}{
\begin{tabular}{l|cccccccccc|c}
\toprule
      \textbf{Model} & MMLU & MMLU-Pro & C-Eval & AGIEval & TrviaQA & BBH & SWDE& SQuAD & Drop & GSM8K & \textbf{Avg.} \\
\midrule
Transformer &          52.6 &          24.2 &          55.9 & 38.9 &          21.8 &          38.8 & \textbf{85.5} & \textbf{52.0} &          \textbf{30.9} & \textbf{50.6} &          45.1 \\
\textbf{SSE-H-n4k1} & \textbf{54.5} & \textbf{26.1} & \textbf{59.7} &         \textbf{39.6} & \textbf{22.7} & \textbf{39.2} &          84.0 &          51.1 & 30.1 &          49.2 & \textbf{45.6} \\
\bottomrule
\end{tabular}}
}
\end{table*}

\textbf{Converting Pre-trained Transformers to Hybrid Models.} We further explore a conversion strategy~\citep{kasai2021finetuning,zhang2024gated} that replaces a subset of softmax attention layers in a pretrained Transformer with linear attention layers, followed by continued training on a limited dataset. Specifically, we apply this method to a 2B Transformer during its 128k long-context training stage, using approximately 100B tokens. We evaluate the resulting models on three 32k-context retrieval tasks. As shown in Table~\ref{tab:1b_convert}, the converted SSE-H model achieves substantially better retrieval consistency than GLA-H, and further narrows the performance gap with the original Transformer.

\begin{table*}[h]
\centering
\small
\setlength{\tabcolsep}{4pt}
\centering
\caption{\textbf{Evaluation of long-range retrieval (32k) under the conversion paradigm.}}
\label{tab:1b_convert}{
\begin{tabular}{l|c|ccc|c}
\toprule
\textbf{Model} & Params. & Quest & Qampari & Table Query & \textbf{Avg.} \\
\midrule
Transformer &2B &49.1& 71.5& 60.6 & 60.4\\
GLA-H &2B &25.6& 56.6& 48.2 & 43.5\\
\textbf{SSE-H-n4k1} &2B &35.6& 70.6&50.0& 52.1\\
\bottomrule
\end{tabular}}
\end{table*}

\subsection{Reasoning Ability}

We further evaluate whether hybrid linear attention models like SSE-H can serve as effective reasoning models when trained under standard pipelines that include supervised distillation and reinforcement learning. Reasoning presents a particular challenge for efficient architectures, as it often requires multi-step computation, symbolic manipulation, and precise memory access—all traditionally dominated by full-attention Transformers.

To assess this capability, we follow common practice and evaluate on a suite of math-focused benchmarks, including MATH500, AIME24, AIME25, AMC23, and OlympiadBench. For AIME24 and AIME25, we repeat the evaluation set 32 times and report average accuracy (avg@32). Inference uses a temperature of 1.0 and top-p of 0.7. For comparison, we include a range of strong open-source small-scale reasoning models, covering both softmax and hybrid linear attention designs.

As shown in Table~\ref{tab:reasoning}, our 2B SSE-H-n4k1 achieves substantial gains over similarly sized open-source softmax Transformers—improving AIME24 accuracy from 48.3 to 64.5 (+16.2), and AIME25 from 36.8 to 50.2 (+13.4). Its performance also approaches that of much larger models like DeepSeek-R1-Distill-Qwen-7B~\citep{guo2025deepseek}, underscoring the strength of hybrid models for high-level reasoning.

Importantly, when trained with exactly the same pipeline as the Transformer baseline, SSE-H achieves comparable results, confirming that hybrid linear attention can match softmax attention in mathematical reasoning ability. To further assess scalability, we also convert a pretrained 12B Transformer to an SSE-H variant, which achieves similar reasoning accuracy to its softmax-attention counterpart (see Appendix~\ref{app:exp_details}).

These results demonstrate that SSE-H is not only competitive with state-of-the-art small reasoning models but also shows significant promise as a scalable and efficient architecture for reasoning tasks and reinforcement learning applications.

\begin{table*}[t]
\centering
\small
\setlength{\tabcolsep}{4pt}
\centering
\caption{\textbf{Reasoning ability of the SSE-H model.} Average accuracy (avg@32) for AIME24 and AIME25 is reported, based on 32 repetitions of the evaluation set. Inference is conducted with a temperature of 1.0 and a top-p of 0.7.}
\label{tab:reasoning}{
\begin{tabular}{l|ccccc}
\toprule
\textbf{Model} & AIME24 & AIME25 & MATH500 & OlympiadBench & AMC23 \\
\midrule
DeepScaleR-1.5B-Preview~\citep{deepscaler2025} & 43.1 & - & 87.8 & 50.0 & 73.6 \\
Qwen3-1.7B (Thinking)~\citep{yang2025qwen3} & 48.3 & 36.8 & 93.4 & - & - \\
DeepSeek-R1-Distill-Qwen-1.5B~\citep{guo2025deepseek} & 28.9 & 23.5 & 83.9 & 43.3 & 62.9 \\
DeepSeek-R1-Distill-Qwen-7B~\citep{guo2025deepseek} & 55.5 & 39.2 & 92.8 & - & - \\
\midrule
M1-3B~\citep{wang2025m1} & 23.0 & 22.0 & 81.7 & 43.6 & 56.0 \\
PromptCoT-Mamba-7B~\citep{zhao2025scaling} & 35.2 & 24.6 & 84.6 & 50.7 & - \\
\midrule
Transformer-2B (Ours) & 64.1 & 52.8 & 93.0 & 83.3 & 92.0 \\
\textbf{SSE-H-n4k1-2B} (Ours)  & 64.5 & 50.2 & 92.1 & 85.7 & 91.4 \\
\bottomrule
\end{tabular}}
\end{table*}

\subsection{Ablation Study}

\textbf{Effectiveness of State Size Scaling in SSE Recall.}  
We evaluate this by conducting experiments on 2B models trained on 20B tokens, varying both the number of partitions ($n$) and the top-$k$ hard selection size ($k$), and measuring the corresponding changes in recall performance. The results are presented in Figure~\ref{fig:state_scale}.
Under the sparse update framework, state expansion in SSE exhibits promising scalability without increasing parameter count:

(1) When the sparsity ratio $k/n$ is held constant and the total number of partitions $n$ increases, recall performance improves approximately proportionally with $n$.
(2) For a fixed $n$, increasing the sparsity ratio $k/n$ within a moderate range (e.g., up to 50\%) also leads to recall gains. Notably, the benefit saturates beyond a certain threshold. In particular, when $k/n = 1$, SSE effectively reduces to vanilla linear attention.

While demonstrating promising scalability, increasing the sparsity ratio and partition number incurs additional computational costs and inference latency. Consequently, we employed the $n4k1$ setting for the majority of our empirical analysis, particularly for the hybrid architecture. This indicates that SSE's performance still exhibits substantial potential for further improvement.




\begin{figure}[t]
  \centering

  \begin{minipage}[b]{0.53\textwidth}
    \centering
    \includegraphics[width=\linewidth]{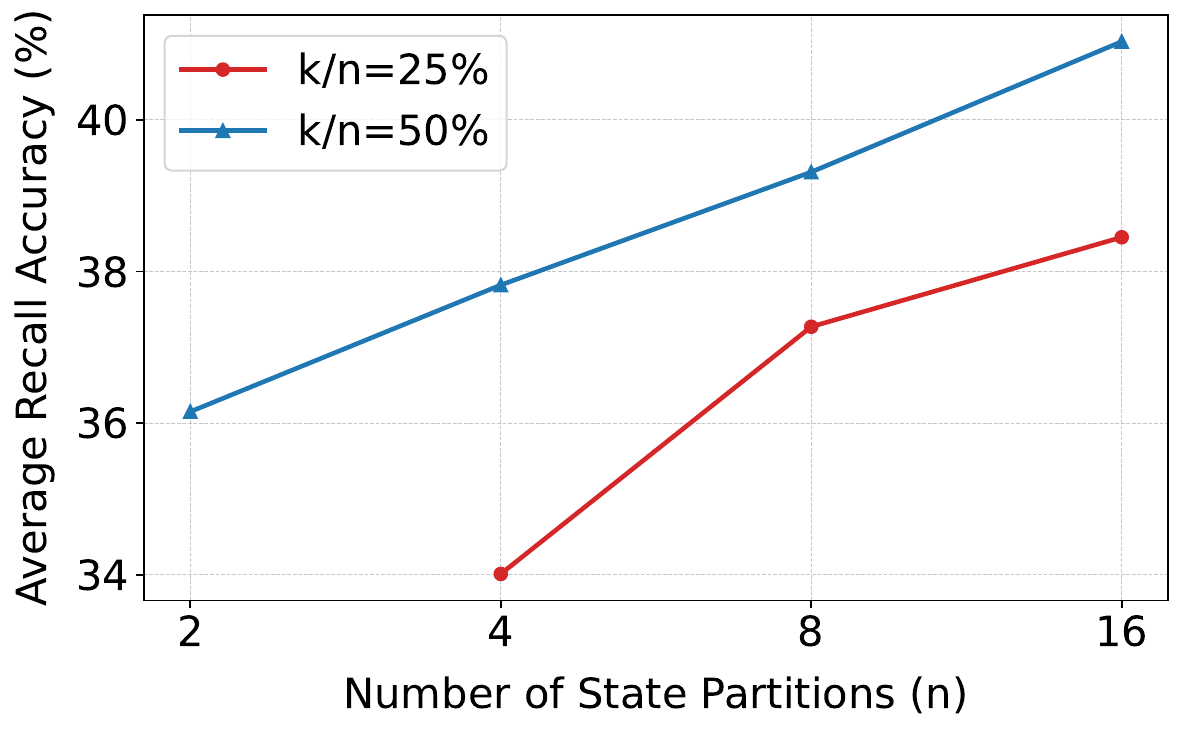}
    \captionof{figure}{\textbf{Scalability of SSE with respect to state capacity.} $n$ denotes the number of partitions; $k$ is the top-$k$ hard selection size.}
    \label{fig:state_scale}
  \end{minipage}
  \hfill
  \begin{minipage}[b]{0.43\textwidth}
    \centering
    \includegraphics[width=\linewidth]{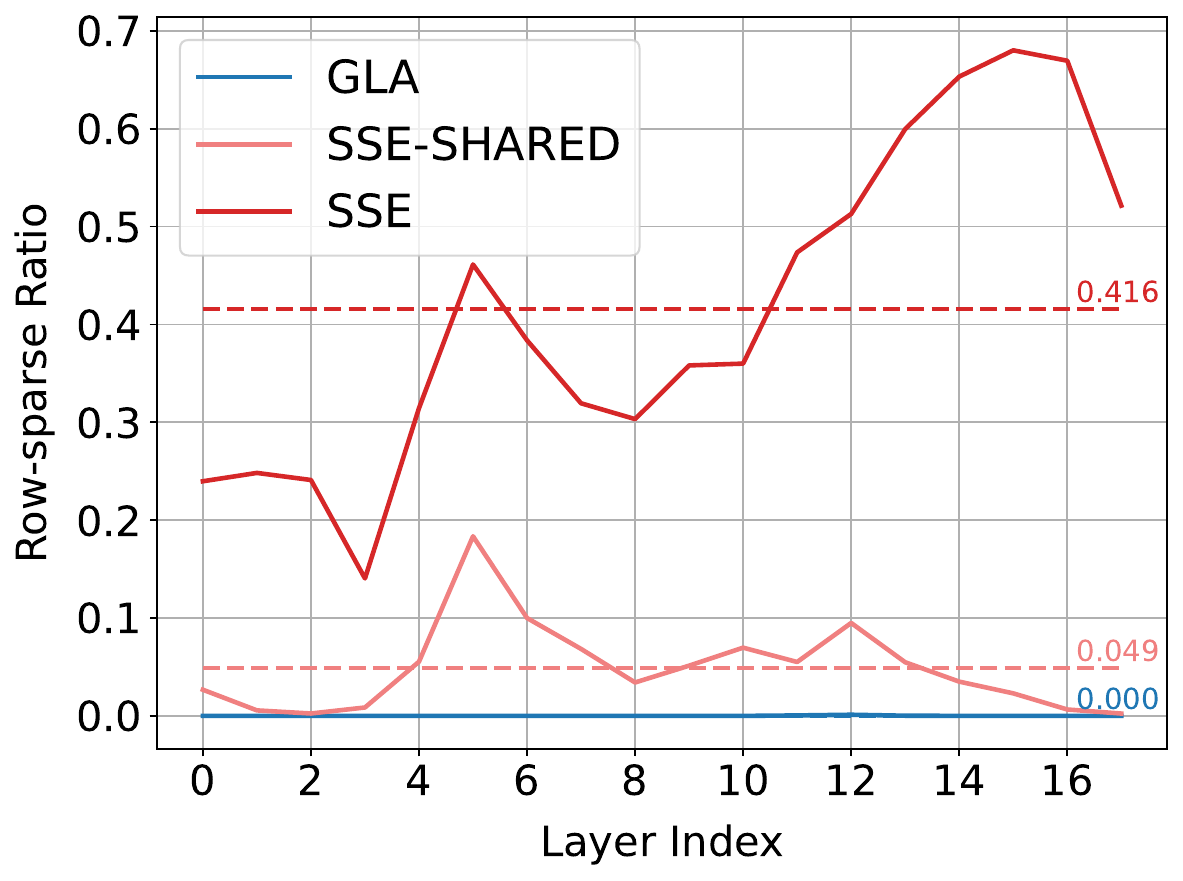}
    \captionof{figure}{\textbf{Row sparsity ratio of state updating.} The row sparsity ratio is measured by the proportion of channels in $\mathbf{k}_t$ with values below $1\text{e-}5$.}
    \label{fig:row_sparsity}
  \end{minipage}

\end{figure}

\textbf{Effectiveness of Softmax-based Row Selection.}  
We conduct an ablation study on 600M models trained with 15B tokens to evaluate the effectiveness of the proposed softmax feature map for row-sparse updates under state expansion. As Table~\ref{tab:feature_map_ablation} shows, using softmax as the feature map for $\mathbf{k}_t$ yields significantly higher recall compared to the commonly used SiLU map. This is consistent with our earlier analysis based on information classification.

We further examine the sparsity of $\mathbf{k}_t$ vectors in 2B GLA and SSE-n4k1 models, using $1\text{e-}5$ as the threshold. As shown in Figure~\ref{fig:row_sparsity}, although SiLU has a value range near zero, the trained GLA model does not exhibit any sparsity. In contrast, SSE with the softmax classifier induces sparsity in the selected partitions, with ratios of 5\% and 42\%. Interestingly, the model learns to write densely to the shared partition while maintaining highly sparse updates in the non-shared partitions.

\begin{table*}[h]
\centering
\small
\setlength{\tabcolsep}{4pt}
\centering
\caption{\textbf{Ablation study on the effect of softmax-based row selection.}}
\label{tab:feature_map_ablation}{
\begin{tabular}{l|cc|ccc|c}
\toprule
\textbf{Models}  &Params.  &Tokens  &FDA  &SWDE & SQuAD & \textbf{Avg.} \\
\midrule
SSE-n4k1-k.silu &600M &15B	& 20.87 &26.91& 24.90&  24.23\\
SSE-n4k1-k.softmax &600M &15B	& 33.67 &33.30&  26.51& 31.16 (+6.93)\\
\bottomrule
\end{tabular}}
\end{table*}

\textbf{Effectiveness of Write-Read Gating.}
Within SSE’s hard partition selection framework, we introduce a write-read gate that operates on both the state input and output (i.e., KV and Q) simultaneously. We conduct a comprehensive ablation study on 2B models trained with 100B tokens, comparing four variants: (1) no gating, (2) input-only write gate, (3) output-only read gate, and (4) our proposed write-read gate. As Table~\ref{tab:gate_ablation} illustrates, the write-read gate consistently outperforms all alternatives on both recall and commonsense reasoning tasks, demonstrating the effectiveness of partition selection design in SSE.

\begin{table*}[h]
\centering
\small
\setlength{\tabcolsep}{4pt}
\centering
\caption{\textbf{Ablation study on write-read gating.} \textbf{Models:} SSE with write–read gating (applied to both input and output) compared against three variants: no gate, write-only gate, and read-only gate. \checkmark indicates the gate is applied, and \ding{55} indicates it is not. \textbf{Metrics:} CSR-Avg. is the average of commonsense reasoning tasks.}
\label{tab:gate_ablation}{
\begin{tabular}{l|cc|cc|c|ccc|c|c}
\toprule
\textbf{Models}  &Write& Read &Params.  &Tokens &Wiki.$\downarrow$  &FDA  &SWDE & SQuAD & \textbf{Recall-Avg.}  &\textbf{CSR-Avg.} \\
\midrule
SSE-n4k1&\checkmark&\checkmark &2B &100B &18.44	& 64.07 &65.17& 40.65 &56.63 &53.63\\
\quad no gate&\ding{55}&\ding{55} &2B &100B &18.63 & 51.91 &63.91 &39.81 &51.87 (-4.76) &53.01 (-0.62)\\
\quad write gate&\checkmark&\ding{55} &2B &100B &18.65	& 54.54& 61.12& 37.10&  50.92 (-5.71) &52.82 (-0.81)\\
\quad read gate&\ding{55}&\checkmark &2B &100B &18.62	& 57.26& 64.45& 40.18& 53.96 (-2.67) &52.91 (-0.72)\\
\bottomrule
\end{tabular}}
\end{table*}

\textbf{Effectiveness of Shared Attention Parameters.}
SSE shares attention parameters (QKV and gate projection) across partitions, while applying LoRA to decouple QK projections for the always-selected partition. For comparison, we train a variant where each of the $N+1$ partitions has its own attention parameters, denoted as "wo/ shared-params". As shown in Table~\ref{tab:params_ablation}, despite nearly doubling the non-embedding parameters, this variant shows lower recall and only marginal gains on commonsense reasoning. These results support the key assumption of SSE: overcoming the limitations of linear attention depends on state capacity rather than parameter count. Effective parameter sharing not only enables the model to learn segmented clustering of input information, but may also facilitate more stable optimization.

\begin{table*}[h]
\centering
\small
\setlength{\tabcolsep}{4pt}
\centering
\caption{\textbf{Ablation study on shared attention parameters.} \textbf{Models:} SSE shares attention parameters across partitions, while "wo/ shared-params" denotes a variant with separate parameters for each partition.  \textbf{Metrics:} CSR-Avg. is the average of commonsense reasoning tasks.}
\label{tab:params_ablation}{
\resizebox{\textwidth}{!}{
\begin{tabular}{l|cc|c|ccc|c|c}
\toprule
\textbf{Models}  &Params. (non-embed)  &Tokens &Wiki.$\downarrow$  &FDA  &SWDE & SQuAD & \textbf{Recall-Avg.}  &\textbf{CSR-Avg.} \\
\midrule
SSE-n4k1 &600M (300M) &15B &35.52	& 33.67 &33.30&  26.51& 31.16 &42.91\\
\quad wo/ shared-params &890M (580M) &15B &34.94	& 16.52& 36.45& 24.87& 25.95 (-5.21) &42.99 (+0.08)\\
\bottomrule
\end{tabular}}
}
\end{table*}

\section{Limitations and Future Work}
While our experiments demonstrate the promising scalability of SSE with GLA-style transition mechanisms, we have not yet conducted comprehensive evaluations with delta-rule baselines (e.g., GDN). Moreover, more sophisticated classification functions, such as state-aware classifiers, could be effectively combined with SSE-GDN, offering a promising direction for future work.

Concurrently, we observe that non-hybrid SSE still exhibits a notable retrieval gap compared to well-trained Transformer models. Inspired by SSE's effective state size scalability, we consider further investigation into SSE's state scaling law a potential solution. Given the high search cost of partition count as a hyperparameter, we plan to utilize an up-training scheme to expand state size based on existing checkpoints. Thanks to our parameter sharing strategy, SSE can smoothly transition between different $N,K$ configurations, quickly recovering general ability without requiring extensive retraining or distillation. We consider SSE's state up-training a key direction for future work.

We acknowledge that SSE's current context-only hard-selection mechanism may not be optimal and warrants broader exploration. The computation of $\mathbf{e}_t$ provides a flexible interface for incorporating inductive biases into the model. A compelling future direction involves incorporating dependencies on positional and historical information, leading to a more general form such as $\mathbf{e}_t = g(t, \mathbf{x}_t, \mathbf{S}_{t-1})$.

\section{Related Work}
\textbf{Linear Attention.} Linear attention~\citep{katharopoulos2020transformers} replaces the softmax kernel of Transformer attention with a dot-product of feature maps, thereby achieving linear complexity and more efficient long-context inference. Early research in this area~\citep{choromanskirethinking,peng2021random,choromanskihybrid,zhanghedgehog} primarily focused on designing specialized non-negative feature maps to approximate the behavior of the softmax kernel. Subsequent works~\citep{sun2023retentive,qin2023transnormerllm} further simplified linear attention, removing unnecessary normalizer terms and other specialized feature maps. Concurrent with the evolution of linear attention, other generalized linear recurrent mechanisms emerged, such as State Space Models (SSMs)~\cite{guefficiently,gu2022parameterization,fu2022hungry,gumamba,dao2024transformers} and linearized RNNs~\citep{qin2023hierarchically,peng2023rwkv,orvieto2023resurrecting,beckxlstm}. Through continuous development, these mechanisms have converged towards a unified decay and outer-product update formulation~\citep{chou2024metala}, similar to the unnormalized linear attention discussed in our work. 

Recent findings~\citep{yang2024gated,gumamba} highlight the benefit of introducing input-dependency into state decay, enabling more adaptive memory control through gating mechanisms. More recent trends involve densifying the state transition matrix, exemplified by delta-rule-based attentions~\citep{yangparallelizing,yang2024gdn,peng2025rwkv,siems2025deltaproduct}. While sophisticated structures, such as various TTT variants~\citep{sun2024learning,behrouz2024titans,wang2025test,behrouz2025s} designed based on online learning, offer advanced state updates, their nonlinear dependencies often introduce trade-offs in implementation efficiency. In contrast to these works that primarily focus on the state transition matrix, our work re-examines how new information is more effectively incorporated into the state. We achieve this by innovatively connecting the state update mechanism to an information classification process.

\textbf{State Expansion.} Recent literature~\citep{arora2024zoology,arora2024simple,jelassi2024repeat,akyurek2024context,waleffe2024empirical} consistently highlights the challenges posed by limited state size in linear attention models, particularly evident in retrieval tasks. Consequently, state expansion has emerged as a seemingly indispensable direction. Beyond directly expanding the state, Log-Linear Attention~\citep{guo2025log} adopts a Fenwick tree structure to enable logarithmically growing state sizes. Our work bears the closest resemblance to Mixture-of-Memories (MoM)~\citep{du2025mom}, which models the state using a Mixture-of-Experts (MoE) approach and shares a similar sparse spirit with SSE. However, SSE designs a sparse state expansion scheme specifically from an information classification perspective, explicitly decouples state capacity from parameter count, and introduces improved feature mapping and gating mechanisms. Furthermore, we provide a comprehensive theoretical framework demonstrating that SSE's state achieves more effective information storage. Our extensive experimental pipeline, spanning pretraining to RL, robustly confirms the proposed model's stability.

\textbf{Hybrid Models and Reasoning.} Beyond solely expanding the state of linear attention, a common compromise involves hybridizing some softmax attention layers. This approach sacrifices a degree of computational efficiency in exchange for significant improvements in retrieval performance. Recent works have explored both inter-layer~\citep{lieber2024jamba,glorioso2024zamba,yangparallelizing,li2025minimax,wang2025systematic} and intra-layer~\citep{dong2024hymba,li2025transmamba,behrouz2024titans} hybrid paradigms. Notably, such hybrid architectures have recently demonstrated promising overall performance, even at large scales~\citep{li2025minimax,chen2025minimax,liu2025hunyuan}. Concurrently, the reasoning capabilities of pure linear attention models remain relatively underexplored, largely relying on hybrid architectures~\citep{wang2025m1,liu2025hunyuan,chen2025minimax,zhao2025scaling}. Our work is the first to demonstrate that, under identical pre-training paradigms, hybrid linear models can achieve reasoning performance on par with Transformer models, thereby enjoying the advantages of test-time scaling. Through fine-tuning and advanced RL training, we report that our 2B SSE-hybrid model exhibits exceptional mathematical reasoning ability, achieving state-of-the-art performance at its scale.

\section{Conclusion}
In this work, we enhance the capacity of existing linear attention models in handling long contexts by proposing two key innovations: row-sparse state updates and Sparse State Expansion (SSE). Our framework conceptualizes state updates as an information classification process and enables efficient expansion of contextual states into multiple partitions with controlled parameter count. This design facilitates both scalable memory capacity and more discriminative state representations. We demonstrate that SSE and its hybrid variant (SSE-H) achieve superior performance across language modeling, in-context retrieval, and mathematical reasoning tasks. Notably, our 2B SSE-H model sets state-of-the-art reasoning performance among small-scale models on math-focused benchmarks, significantly surpassing comparably sized Transformers. Together with its favorable scalability, these results position SSE as a promising and efficient architecture for high-fidelity long-context modeling.
\clearpage
\section{Contributions}
\label{sec:Contributions}

\subsection*{Project Lead}
Zheng Li$^{1}$

\makeatletter
\renewcommand{\thefootnote}{$\dagger$}
\makeatother

\subsection*{Core Contributors\footnotemark[1]}
\footnotetext[1]{Equal Contribution.}
\makeatletter
\renewcommand{\thefootnote}{$\ddagger$}
\makeatother
Yuqi Pan$^{1,2}$\footnotemark[2], Yongqi An$^{1,2}$\footnotemark[2], Zheng Li$^{1}$
\footnotetext[2]{Work done during internship at ByteDance Seed.}

\subsection*{Contributors}
Yuhong Chou$^{3}$, Ruijie Zhu$^{1,4}$ 

\subsection*{Supervisors}
Xiaohui Wang$^{1}$, Mingxuan Wang$^{1}$, Jinqiao Wang$^{2}$, Guoqi Li$^{2}$

\subsection*{Affiliation}
$^{1}$ByteDance Seed\\
$^{2}$Institute of Automation, Chinese Academy of Sciences\\
$^{3}$The Hong Kong Polytechnic University\\
$^{4}$UC Santa Cruz

\makeatletter
\renewcommand{\thefootnote}{\arabic{footnote}}  
\makeatother

\clearpage

\bibliographystyle{plainnat}
\bibliography{main}

\clearpage

\beginappendix

\section{Analyzing State Updating as Information Classification}\label{app:theory}
In this section, we present theoretical analyses and key insights related to the information classification framework for state representations. Our discussion primarily focuses on the case of a linear classifier, namely:
\begin{align}
\mathbf{k}_t&=f(\mathbf{x}_t, \mathbf{W}_k) = \mathbf{x}_t\mathbf{W}_k,\\
    \mathbf{S}_t &= \mathbf{\Lambda}_t \mathbf{S}_{t-1}+\mathbf{k}_t^{\top}\mathbf{v}_t.
\end{align}

\textbf{The contextual state updates in linear attention act as an information classification process.} By conceptualizing $\mathbf{k}_t = \mathbf{x}_t \mathbf{W}_k$ as a classification function, we can demonstrate that information assigned to the same state row exhibits similar features, effectively making the state $\mathbf{S}_t$ function as an information classifier. In the linear classifier setting, information similarity is measured by the inner product. The classification result for class $i$ is defined as $C^i=\{\mathbf{x}_s|\mathbf{x}_s \mathbf{W}_k^i > k_{th}^i\}$, where $k_{\text{th}}^i > 0$ is a class-specific threshold. Proposition~\ref{prop:class} shows that information assigned to the same state row exhibits large mutual inner products.

\begin{proposition}\label{prop:class}
    For inputs $\mathbf{x}_s$ satisfying $||\mathbf{x}_s||^2=d$, and given the classification rule $C^i=\{\mathbf{x}_s|\mathbf{x}_s \mathbf{W}_k^i > k_{th}^i\}$, inputs belonging to the same class (row) $C^i$ satisfy $\mathbf{x}_r\mathbf{x}_s^{\top}>d\cos{2\theta^i}$, where $\theta^i=\arccos(\frac{k_{th}^i}{\sqrt{d}||\mathbf{W}_k^i||})$, $s,r\in\{1,\dots,t\}$, and vice versa.
\end{proposition}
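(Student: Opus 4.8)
The plan is to translate the linear-classifier membership condition into a statement about the angle each input makes with the weight direction $\mathbf{W}_k^i$, and then bound the pairwise angle between two inputs in the same class using a triangle-inequality-type argument on the sphere. Concretely, write $\hat{\mathbf{w}} = \mathbf{W}_k^i / \|\mathbf{W}_k^i\|$ and note that every input has fixed norm $\|\mathbf{x}_s\| = \sqrt{d}$. Membership $\mathbf{x}_s \in C^i$ means $\mathbf{x}_s (\mathbf{W}_k^i)^\top > k_{th}^i$, i.e. $\langle \mathbf{x}_s, \hat{\mathbf{w}} \rangle > k_{th}^i / \|\mathbf{W}_k^i\|$, which after dividing by $\|\mathbf{x}_s\| = \sqrt{d}$ becomes $\cos \angle(\mathbf{x}_s, \hat{\mathbf{w}}) > \frac{k_{th}^i}{\sqrt{d}\,\|\mathbf{W}_k^i\|} = \cos\theta^i$. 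Since $\theta^i = \arccos\!\big(\frac{k_{th}^i}{\sqrt{d}\|\mathbf{W}_k^i\|}\big)$ and $k_{th}^i > 0$ forces $\theta^i \in [0, \pi/2)$, this is exactly the statement that the angle between $\mathbf{x}_s$ and $\hat{\mathbf{w}}$ is strictly less than $\theta^i$.

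Next I would combine two such constraints. If $\mathbf{x}_r, \mathbf{x}_s \in C^i$, then $\angle(\mathbf{x}_r, \hat{\mathbf{w}}) < \theta^i$ and $\angle(\mathbf{x}_s, \hat{\mathbf{w}}) < \theta^i$. Because the geodesic angle on the unit sphere satisfies the triangle inequality, $\angle(\mathbf{x}_r, \mathbf{x}_s) \le \angle(\mathbf{x}_r, \hat{\mathbf{w}}) + \angle(\hat{\mathbf{w}}, \mathbf{x}_s) < 2\theta^i$. Cosine is monotonically decreasing on $[0,\pi]$, and $2\theta^i \in [0,\pi)$, so $\cos \angle(\mathbf{x}_r, \mathbf{x}_s) > \cos 2\theta^i$. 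Multiplying through by $\|\mathbf{x}_r\|\|\mathbf{x}_s\| = d$ gives $\mathbf{x}_r \mathbf{x}_s^\top > d \cos 2\theta^i$, which is the claimed bound. For the converse direction ("and vice versa"), I would argue that if $\mathbf{x}_r$ lies in $C^i$ and $\mathbf{x}_s$ satisfies $\mathbf{x}_r \mathbf{x}_s^\top > d\cos 2\theta^i$, then $\angle(\mathbf{x}_r,\mathbf{x}_s) < 2\theta^i$, but this alone does not force $\mathbf{x}_s \in C^i$ without an additional hypothesis; I expect the intended reading is the weaker symmetric statement that non-membership in a common class implies the inner product can be as small as $d\cos 2\theta^i$ is tight, or that the bound characterizes the class up to the stated angular width — I would state precisely which direction is meant and prove the clean forward implication in full, remarking on the tightness of the constant (achieved when both inputs sit on the boundary cone on opposite sides of $\hat{\mathbf{w}}$).

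The main obstacle is not the geometry — which is a one-line spherical triangle inequality — but making the "and vice versa" clause rigorous and correctly scoped. The forward implication genuinely needs only $k_{th}^i > 0$ (to keep $\theta^i < \pi/2$ so that $2\theta^i < \pi$ and the final cosine comparison is valid) and the norm normalization $\|\mathbf{x}_s\|^2 = d$. The converse, however, cannot hold as a strict biconditional: two vectors with large mutual inner product need not both exceed the threshold against $\hat{\mathbf{w}}$. So the real work is to phrase the statement we actually prove — e.g. "$\mathbf{x}_r, \mathbf{x}_s \in C^i \iff$ both lie within angle $\theta^i$ of $\hat{\mathbf{w}}$, and this implies (but is not implied by) $\mathbf{x}_r\mathbf{x}_s^\top > d\cos 2\theta^i$" — and to note the edge case where $2\theta^i$ could in principle reach $\pi$ if $k_{th}^i = 0$, which the hypothesis $k_{th}^i > 0$ rules out. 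I would also double-check that the inequality is consistent even when $C^i$ is empty or a singleton (vacuously true), so no degenerate case breaks the statement.
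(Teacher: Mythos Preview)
Your forward implication is exactly the paper's argument: both reduce membership $\mathbf{x}_s\in C^i$ to the angular condition $\angle(\mathbf{x}_s,\mathbf{W}_k^i)<\theta^i$ via the norm normalization, then bound the pairwise angle by $2\theta^i$ (the paper asserts this step without naming the spherical triangle inequality, but it is the same move), and finally convert back to an inner-product bound using monotonicity of cosine on $[0,\pi)$.

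The only substantive difference is in the ``vice versa'' clause. You correctly diagnose that a strict biconditional cannot hold as written. The paper does not weaken the conclusion as you propose; instead it \emph{adds} an assumption absent from the proposition statement---that the thresholds are chosen large enough to force $\theta^i < r/4$ where $r=\min_{i,j}\angle(\mathbf{W}_k^i,\mathbf{W}_k^j)$---so that the class cones are angularly disjoint, and then concludes that inputs from \emph{different} classes satisfy $\mathbf{x}_r\mathbf{x}_s^\top\le d\cos 2\theta^i$. So the paper's intended reading of ``vice versa'' is the cross-class direction under this extra separation hypothesis, not a literal converse. Your caution here is well placed; if you adopt the paper's extra hypothesis you recover its version, and otherwise your forward-only formulation with a tightness remark is the honest statement.
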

\begin{proof}
    The classification rule implies inputs with high acceptance intensity $\mathbf{k}_i=\mathbf{x}_s \mathbf{W}_k^i$ belong to the $i$-th class (row), and the classification threshold $k_{th}^i>0$.

    If $\mathbf{x}_s\in C^i$, then 
    \begin{align}
        \mathbf{x}_s \mathbf{W}_k^i &= \sqrt{d}||\mathbf{W}_k^i||\cos\theta > k_{th}^i,\\
        \Rightarrow \cos\theta &> \frac{k_{th}^i}{\sqrt{d}||\mathbf{W}_k^i||}=\cos\theta^i, 
    \end{align}
    which means all inputs from class $i$ have an angle with $\mathbf{W}_k^i$ less than $\theta^i$, where $k_{th}^i\in (0, \sqrt{d}||\mathbf{W}_k^i||)$ and $\theta^i\in(0, \frac{\pi}{2})$. This is equivalent to saying that the angle between inputs of the same class $i$ is less than $2\theta^i$. Thus we obtain:
    \begin{align}
        \mathbf{x}_r\mathbf{x}_s^{\top}>||\mathbf{x}_r|| \ ||\mathbf{x}_s|| \cos{2\theta^i}=d\cos{2\theta^i},
    \end{align}
    where $\mathbf{x}_r,\mathbf{x}_s$ are arbitrary inputs from class $i$. 
    
    Furthermore, if we set the threshold large enough to ensure that $\theta^i<\frac{r}{4}$, where $r=\min_{i,j} \theta(\mathbf{W}_k^i,\mathbf{W}_k^j)$, we can ensure that for inputs from different classes, $\mathbf{x}_r\mathbf{x}_s^{\top}\leq d\cos{2\theta^i}$.
\end{proof}\hfill $\square$

\textbf{Current linear attention suffers from noise and forgetting due to incomplete use of classification assignments.} As shown in Proposition~\ref{prop:class}, vanilla linear attention implicitly incorporating the notion of information classification, but does not fully leverage the resulting class assignments for compressed storage:
\begin{align}
    \mathbf{S}_t^i &=\sum_{j=1}^c\sum_{s_j\in C_j} w_{s_j}^i \mathbf{x}_{s_j}\mathbf{W}_v.\label{eq:wss}
\end{align}

This mixing of information from different classes leads to two major issues: (1) \textbf{Noise}: Propositions~\ref{prop:noise1} and~\ref{prop:noise2} show that state rows in vanilla linear attention exhibit high similarity due to inter-class interference, resulting in state row homogenization and reduced querying effectiveness. (2) \textbf{Forgetting}: Proposition~\ref{prop:forget} demonstrates that applying uniform decay to all state rows adversely limits the receptive field.

In contrast, by explicitly leveraging class assignments, we introduce a \textbf{row-sparse update formulation of linear attention}:
\begin{align}
    \mathbf{S}_t^i &=\sum_{s_i\in C_i} w_{s_i}^i \mathbf{x}_{s_i}\mathbf{W}_v.\label{eq:wss_sparse}
\end{align}
Through this hard classification, each row of the contextual state stores similar information belonging to the same category, resulting in higher inter-row discriminability and more precise information organization. Simultaneously, each element undergoes fewer decay operations, effectively extending the receptive field over longer contexts. The top-$k$ operation introduced in this work is one of the methods to achieve row-sparse updates.

\begin{definition}\label{def:def1}
    The row-wise cosine similarity in vanilla linear attention is defined as $\operatorname{Cos}\left(\mathbf{S}_t^i, \mathbf{S}_t^j\right)=\frac{\langle \mathbf{S}_t^i, \mathbf{S}_t^j\rangle}{||\mathbf{S}_t^i||\, ||\mathbf{S}_t^j||}$. Similarly, the row-wise cosine similarity under the row-sparse update formulation is denoted as $\operatorname{\widetilde{Cos}}\left(\mathbf{S}_t^1, \mathbf{S}_t^2\right)$ for differentiation.
\end{definition}

\begin{proposition}\label{prop:noise1}
    Assume that $\mathbf{W}_v$ is orthogonal and $w_{s_j}^i\geq 0\, (i,j\in{1,\dots,c})$ in Equation~\eqref{eq:wss}, the row-wise cosine similarity in vanilla linear attention is greater than that of the corresponding row-sparse version, i.e., $\operatorname{Cos}\left(\mathbf{S}_t^i, \mathbf{S}_t^j\right)>\operatorname{\widetilde{Cos}}\left(\mathbf{S}_t^i, \mathbf{S}_t^j\right)$.
\end{proposition}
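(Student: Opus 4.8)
The plan is to compare the two cosine similarities by writing each state row explicitly and exploiting the orthogonality of $\mathbf{W}_v$ to reduce inner products of transformed vectors to inner products of the raw inputs. Since $\mathbf{W}_v$ is orthogonal, $\langle \mathbf{x}_r \mathbf{W}_v, \mathbf{x}_s \mathbf{W}_v\rangle = \langle \mathbf{x}_r, \mathbf{x}_s\rangle$, so all quantities appearing in $\operatorname{Cos}$ and $\operatorname{\widetilde{Cos}}$ become weighted sums of pairwise input inner products $\mathbf{x}_r\mathbf{x}_s^\top$. The key structural fact, supplied by Proposition~\ref{prop:class}, is a dichotomy: for inputs in the same class the inner product exceeds $d\cos 2\theta^i$ (a large positive number when $\theta^i$ is small), while for inputs in different classes it is bounded above by the same quantity — and, more usefully, can be taken close to $0$ or negative when the thresholds are chosen so that the class cones are well separated. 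I would first record this as the working hypothesis: within-class inner products are large and positive, cross-class inner products are small (nonpositive, or at least much smaller).

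Next I would expand the numerators. In the sparse formulation $\mathbf{S}_t^i = \sum_{s_i \in C_i} w_{s_i}^i \mathbf{x}_{s_i}\mathbf{W}_v$, so $\langle \mathbf{S}_t^i, \mathbf{S}_t^j\rangle = \sum_{s_i\in C_i}\sum_{s_j\in C_j} w_{s_i}^i w_{s_j}^j \langle \mathbf{x}_{s_i}, \mathbf{x}_{s_j}\rangle$ involves only \emph{cross-class} pairs (since $i\neq j$), hence is small by the dichotomy. In the vanilla formulation each row additionally carries contributions from every class, $\mathbf{S}_t^i = \sum_{\ell=1}^c \sum_{s_\ell\in C_\ell} w_{s_\ell}^i \mathbf{x}_{s_\ell}\mathbf{W}_v$, so the inner product $\langle \mathbf{S}_t^i,\mathbf{S}_t^j\rangle$ now contains in particular the diagonal terms where both rows draw from the same class $\ell$, contributing $\sum_\ell \sum_{s_\ell, s'_\ell \in C_\ell} w_{s_\ell}^i w_{s'_\ell}^j \langle \mathbf{x}_{s_\ell}, \mathbf{x}_{s'_\ell}\rangle$, which is large and positive because these are within-class pairs with nonnegative weights. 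Meanwhile for the denominators, $\|\mathbf{S}_t^i\|$ in the vanilla case and in the sparse case differ, but the sparse row is a sub-sum of the vanilla row's terms; with nonnegative weights and the within-class positivity, one can show the extra mass in the vanilla numerator grows at least as fast (relative to the norms) as in the sparse case, so that the normalized ratio strictly increases. Concretely I would bound $\operatorname{Cos} - \operatorname{\widetilde{Cos}}$ from below by isolating the additional positive cross terms in the vanilla numerator and controlling the denominator inflation via Cauchy–Schwarz / triangle inequality on the added vectors.

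The main obstacle I anticipate is the denominator: adding the off-class contributions to each row $\mathbf{S}_t^i$ enlarges both the numerator $\langle \mathbf{S}_t^i, \mathbf{S}_t^j\rangle$ and the norms $\|\mathbf{S}_t^i\|, \|\mathbf{S}_t^j\|$, and a naive bound could let the norm growth dominate, reversing the inequality. To handle this cleanly I would either (i) pass to a regime where cross-class inner products are exactly $0$ (orthogonal class cones, which Proposition~\ref{prop:class} permits by choosing $\theta^i < r/4$), so that the vanilla norm decomposes as a sum over classes $\|\mathbf{S}_t^i\|^2 = \sum_\ell \|\sum_{s_\ell} w_{s_\ell}^i \mathbf{x}_{s_\ell}\mathbf{W}_v\|^2$ and the cosine becomes a genuine weighted average of per-class cosines — all of which equal $1$ for the shared-class components and are nonnegative otherwise — forcing it above the sparse value which sees only the single well-separated cross pair; or (ii) treat the general case as a perturbation of (i), showing the strict inequality is preserved for sufficiently separated cones. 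I would present the argument first under the clean orthogonal-cone assumption to expose the mechanism, then remark that continuity extends it, flagging the separation condition inherited from Proposition~\ref{prop:class} as the precise hypothesis under which the strict inequality holds.
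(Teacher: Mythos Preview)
Your setup matches the paper's: reduce via the orthogonality of $\mathbf{W}_v$ to inner products of raw inputs, then exploit the within-class/cross-class dichotomy from Proposition~\ref{prop:class}. You also correctly identify the denominator as the only real obstacle. Where you diverge is in how that obstacle is dispatched. The paper does \emph{not} pass to orthogonal class cones. Instead it (a) specializes to two classes, (b) observes that a nonnegative weighted sum of vectors from the cone $C_j$ stays in the cone, so each class-$j$ contribution to row $i$ can be written as $\alpha^i_j\,\mathbf{x}^i_j$ with $\mathbf{x}^i_j\in C_j$ and $\|\mathbf{x}^i_j\|=\sqrt d$, and (c) upper-bounds the norm by the triangle inequality, $\|\overline{\mathbf{S}_t^i}\|\le \sqrt d\,(\alpha^i+\beta^i)$. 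With the numerator lower-bounded using the within-class inequality $\langle\mathbf{x}^1_j,\mathbf{x}^2_j\rangle>d\cos\theta$ and the cross-class inequality $\langle\mathbf{x}^1_1,\mathbf{x}^2_2\rangle\le d\cos\theta$, the comparison $\operatorname{Cos}>\operatorname{\widetilde{Cos}}$ reduces to a one-line algebraic check, valid for \emph{any} separation satisfying Proposition~\ref{prop:class}, not just orthogonal cones.

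Your route (i) would work, but note that Proposition~\ref{prop:class} with $\theta^i<r/4$ only guarantees the cross-class inner product is $\le d\cos 2\theta^i$, not that it vanishes; ``orthogonal class cones'' is an additional geometric assumption on $\mathbf{W}_k$ that the proposition does not supply, so you would be proving a weaker statement than claimed. Your route (ii) by perturbation is plausible but would need a quantitative stability bound you have not sketched. The paper's cone-representative device sidesteps both issues and is worth adopting: it collapses each class's weighted sum to a single unit-scale vector, after which the denominator bound is immediate and the whole argument becomes finite-dimensional algebra in four scalars $\alpha^1,\beta^1,\alpha^2,\beta^2$.
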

\begin{proof}
    We use binary-classes as an illustration, which can be generalized to multi-classes scenario. Let $i=1,j=2$, $\overline{\mathbf{S}_t^i} =\sum_{s_1\in C_1} w_{s_1}^i \mathbf{x}_{s_1} + \sum_{s_2\in C_{2}} w_{s_2}^{i} \mathbf{x}_{s_2}$. Because $\mathbf{W}_v$ is an orthogonal matrix, we have:
    \begin{align}
        \langle \mathbf{S}_t^1, \mathbf{S}_t^2\rangle  = \overline{\mathbf{S}_t^1}\mathbf{W}_v\mathbf{W}_v^{\top}\overline{\mathbf{S}_t^2}^{\top}=\langle \overline{\mathbf{S}_t^1}, \overline{\mathbf{S}_t^2}\rangle.
    \end{align}
    Since multiplying by an orthogonal matrix does not change the norm of a vector, therefore:
    \begin{align}
        \operatorname{Cos}\left(\mathbf{S}_t^i, \mathbf{S}_t^j\right) &=\operatorname{Cos}\left(\overline{\mathbf{S}_t^i}, \overline{\mathbf{S}_t^j}\right)\label{eq:S30}
    \end{align}
    We have demonstrated that all inputs from class $j$ have an angle with $\mathbf{W}_k^j$ less than $\theta^j$ (Proposition~\ref{prop:class}). For $\mathbf{x}_{s_j}\in C_j$ and $w_{s_j}^i\geq 0$, the positive weighted sum $\sum_{s_j\in C_{j}} w_{s_j}^{i} \mathbf{x}_{s_j}$ still have an angle with $\mathbf{W}_k^j$ less than $\theta^j$, due to the parallelogram law. Thus we can let $\sum_{s_j\in C_{j}} w_{s_j}^{i} \mathbf{x}_{s_j} = \alpha^i_j \mathbf{x}^i_j$, where $\mathbf{x}^i_j\in C_j$ and $\alpha^i_j>0$ implies module changed ($||\mathbf{x}^i_j||^2=d$).
    
    Then, for vanilla linear attention, we have $\overline{\mathbf{S}_t^1} =\alpha^1\mathbf{x}_1^1 + \beta^1\mathbf{x}_2^1$ and $\overline{\mathbf{S}_t^2} =\alpha^2\mathbf{x}_1^2 + \beta^2\mathbf{x}_2^2$. For notational simplicity, we use $\alpha^i,\beta^i$ representing $\alpha_1^i,\alpha_2^i$, respectively. Let $\cos\theta = \min\{\cos2\theta^1, \cos2\theta^2\}$, we have:
    \begin{align}
        \langle\mathbf{x}_1^1,\mathbf{x}_1^2\rangle &> d \cos2\theta^1 \geq d \cos\theta,\\
        \langle\mathbf{x}_2^1,\mathbf{x}_2^2\rangle &> d \cos2\theta^2 \geq d \cos\theta,\label{eq:S31}\\
        \langle\mathbf{x}_1^1,\mathbf{x}_2^2\rangle &\leq d \cos\theta,\label{eq:final_proof}\\
        \langle\mathbf{x}_2^1,\mathbf{x}_1^2\rangle &\leq d \cos\theta,\\
        ||\overline{\mathbf{S}_t^i}||^2 &= (\alpha^i)^2 d +(\beta^i)^2 d + 2\alpha^i\beta^i\langle\mathbf{x}^i_1,\mathbf{x}^i_2\rangle \leq d(\alpha^i+\beta^i)^2.
    \end{align}
    Because the row index is symmetric, we suppose $\langle\mathbf{x}_2^1,\mathbf{x}_1^2\rangle\geq\langle\mathbf{x}_1^1,\mathbf{x}_2^2\rangle$ without loss of generality. Then we can obtain:
    \begin{align}
    \operatorname{Cos}\left(\overline{\mathbf{S}_t^1}, \overline{\mathbf{S}_t^2}\right) &> \frac{(\alpha^1\alpha^2 + \beta^1\beta^2)(\cos\theta+\epsilon)d+(\alpha^1\beta^2+\beta^1\alpha^2)\langle\mathbf{x}_1^1,\mathbf{x}_2^2\rangle}{d(\alpha^1+\beta^1)(\alpha^2+\beta^2)},\label{eq:S35}
    \end{align}
    where $\epsilon$ refers to a sufficiently small constant. For the corresponding row-sparse linear attention, each state row corresponds to a class, so the contextual state stores information from different classes separately. That is, $\overline{\mathbf{S}_t^1} =\alpha\mathbf{x}_1^1$ and $\overline{\mathbf{S}_t^1} = \beta\mathbf{x}_2^2$. The similarity of the row-sparse attention is computed by:
    \begin{align}
        \operatorname{\widetilde{Cos}}\left(\overline{\mathbf{S}_t^1}, \overline{\mathbf{S}_t^2}\right) &= \frac{\langle\mathbf{x}_1^1,\mathbf{x}_2^2\rangle}{d}.\label{eq:S36}
    \end{align}
    The proof is given by:
    \begin{align}
        d\cos\theta+d\epsilon &> \langle\mathbf{x}_1^1,\mathbf{x}_2^2\rangle, \quad (\text{holds according to Equation~\eqref{eq:final_proof}}) \\
        \Rightarrow (\alpha^1\alpha^2 + \beta^1\beta^2)(\cos\theta+\epsilon)d &> (\alpha^1\alpha^2 + \beta^1\beta^2)\langle\mathbf{x}_1^1,\mathbf{x}_2^2\rangle, \quad (\alpha^i>0, \beta^i>0)\label{eq:S38}\\
        \Rightarrow \operatorname{Cos}\left(\overline{\mathbf{S}_t^1}, \overline{\mathbf{S}_t^2}\right) &> \operatorname{\widetilde{Cos}}\left(\overline{\mathbf{S}_t^1}, \overline{\mathbf{S}_t^2}\right).\quad (\text{holds according to Equations~\eqref{eq:S35}, \eqref{eq:S36}, and \eqref{eq:S38})}\label{eq:S40}
    \end{align}
   This means the state cosine similarity of vanilla linear attention is larger than the corresponding row-sparse version.

    For the multi-classes scenario, we can still utilize triangle inequality of norm $||\overline{\mathbf{S}_t^i}||$ and Proposition~\ref{prop:class} to reach the same conclusion. Note that in this case, we can solely focus on the two most different classes, namely $\langle\mathbf{x}_i^i,\mathbf{x}_j^j\rangle = \min_{k,l}\langle\mathbf{x}^i_k,\mathbf{x}^j_l\rangle$, to complete the derivation, which is enough to prove Proposition~\ref{prop:noise2}.

    Combining Equations~\eqref{eq:S30} and \eqref{eq:S40}, we derive:
    \begin{align}
        \operatorname{Cos}\left(\mathbf{S}_t^i, \mathbf{S}_t^j\right)&>\operatorname{\widetilde{Cos}}\left(\mathbf{S}_t^i, \mathbf{S}_t^j\right)
    \end{align}
\end{proof}\hfill $\square$

Finally, in Proposition~\ref{prop:noise2} we demonstrate that a more precise contextual state (established in Proposition~\ref{prop:noise1}) enables queries to extract diverse information more effectively. For a well-designed classification function $f(\mathbf{x}_t, \mathbf{W}_k)$, we expect the reading operation $\mathbf{q}_t\mathbf{S}_t$ to allow different queries to extract distinct information, a property we refer to as state distinguishability. Higher distinguishability, achieved via row-sparse updates, reflects a more expressive and structured state. In contrast, lower distinguishability, typically observed in vanilla linear attention, leads to state homogenization and inter-class interference, which in turn hampers the model's ability to generate accurate output representations.

\begin{definition}
The measure $\min_{\mathbf{p}_t,\mathbf{q}_t}\operatorname{Cos}(\mathbf{q}_t\mathbf{S}_t,\mathbf{p}_t\mathbf{S}_t)$, referred to as state distinguishability, quantifies the minimum cosine similarity between states outputs, where $\mathbf{q}_t, \mathbf{p}_t$ are arbitrary queries.
\end{definition}

\begin{proposition}\label{prop:noise2}
    For arbitrary queries $\mathbf{q}_t, \mathbf{p}_t\geq\mathbf{0}$, assume that the row norms of $\mathbf{S}_t$ are strictly bounded. The lower bound of the state distinguishability measure, $\min_{\mathbf{p}_t,\mathbf{q}_t}\operatorname{Cos}(\mathbf{q}_t\mathbf{S}_t,\mathbf{p}_t\mathbf{S}_t)$, is given by $\min_{i,j}\langle \mathbf{S}_t^i, \mathbf{S}_t^j\rangle$. Moreover, row-sparse updates reduces this lower bound compared to vanilla linear attention.
\end{proposition}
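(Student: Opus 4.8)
The plan is to prove Proposition~\ref{prop:noise2} in two parts: first establishing the lower bound $\min_{i,j}\langle \mathbf{S}_t^i, \mathbf{S}_t^j\rangle$ for the state distinguishability measure, and then showing that the row-sparse update formulation decreases this bound relative to vanilla linear attention. For the first part, I would write $\mathbf{q}_t\mathbf{S}_t = \sum_i \mathbf{q}_t^i \mathbf{S}_t^i$ and $\mathbf{p}_t\mathbf{S}_t = \sum_j \mathbf{p}_t^j \mathbf{S}_t^j$, so that $\langle \mathbf{q}_t\mathbf{S}_t, \mathbf{p}_t\mathbf{S}_t\rangle = \sum_{i,j}\mathbf{q}_t^i \mathbf{p}_t^j \langle \mathbf{S}_t^i, \mathbf{S}_t^j\rangle$. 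Using $\mathbf{q}_t, \mathbf{p}_t \geq \mathbf{0}$, each coefficient $\mathbf{q}_t^i\mathbf{p}_t^j$ is nonnegative, so the numerator is bounded below by $\left(\sum_i \mathbf{q}_t^i\right)\left(\sum_j \mathbf{p}_t^j\right)\min_{i,j}\langle \mathbf{S}_t^i, \mathbf{S}_t^j\rangle$. Combining with an upper bound on the product of norms $\|\mathbf{q}_t\mathbf{S}_t\|\,\|\mathbf{p}_t\mathbf{S}_t\|$ — here the strict boundedness of the row norms of $\mathbf{S}_t$ is what lets me control $\|\mathbf{q}_t\mathbf{S}_t\| \le \left(\sum_i \mathbf{q}_t^i\right)\max_i\|\mathbf{S}_t^i\|$ — yields the claimed lower bound after the normalization factors cancel, with the extremum attained by concentrating $\mathbf{q}_t$ and $\mathbf{p}_t$ on the two indices $i^\ast, j^\ast$ realizing the minimum inner product.

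For the second part, the key observation is that the lower bound is governed by $\min_{i,j}\langle \mathbf{S}_t^i, \mathbf{S}_t^j\rangle$, and Proposition~\ref{prop:noise1} (together with the norm-boundedness assumption) already tells us that row-sparse updates produce strictly lower row-wise cosine similarity than vanilla linear attention. I would appeal to the multi-class remark in the proof of Proposition~\ref{prop:noise1}, where the derivation reduces to the two most dissimilar classes $\langle \mathbf{x}_i^i, \mathbf{x}_j^j\rangle = \min_{k,l}\langle \mathbf{x}_k^i, \mathbf{x}_l^j\rangle$: in the row-sparse formulation this equals $\langle \mathbf{S}_t^{i}, \mathbf{S}_t^{j}\rangle / (\alpha\beta)$ up to positive scaling, whereas in vanilla linear attention the cross-class contamination in each row (the $\alpha^i \mathbf{x}_j^i$ terms) inflates the inner product via the positive mixed-angle terms bounded below by $d\cos\theta > \langle\mathbf{x}_1^1,\mathbf{x}_2^2\rangle$. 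Under the bounded-norm normalization these translate directly into a strictly smaller $\min_{i,j}\langle \mathbf{S}_t^i, \mathbf{S}_t^j\rangle$ for the row-sparse state.

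The main obstacle I anticipate is being careful about the relationship between \emph{inner products} and \emph{cosine similarities}: Proposition~\ref{prop:noise1} is phrased in terms of cosine similarity, but Proposition~\ref{prop:noise2}'s lower bound involves the raw inner product $\min_{i,j}\langle \mathbf{S}_t^i,\mathbf{S}_t^j\rangle$. Bridging the two requires the assumption that the row norms are strictly bounded (say in $[m, M]$ with $0 < m \le M$), so that cosine similarity and inner product are equivalent up to bounded positive constants, and the direction of the inequality is preserved. A secondary subtlety is handling the case where some $\langle \mathbf{S}_t^i, \mathbf{S}_t^j\rangle$ could be negative, which does not break the argument (the minimum is still attained by a two-point concentration of $\mathbf{q}_t, \mathbf{p}_t$) but needs to be stated cleanly so that the claimed tightness of the bound holds. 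I would finish by remarking that, as in Proposition~\ref{prop:noise1}, the multi-class case follows by restricting attention to the extremal pair of classes, so no genuinely new computation is needed beyond what the previous proof supplies.
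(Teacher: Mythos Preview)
Your proposal is correct and follows essentially the same approach as the paper: expand $\langle \mathbf{q}_t\mathbf{S}_t,\mathbf{p}_t\mathbf{S}_t\rangle$ as a bilinear form in the row inner products, bound the denominator to isolate $\min_{i,j}\langle \mathbf{S}_t^i,\mathbf{S}_t^j\rangle$, and then invoke Proposition~\ref{prop:noise1} together with the row-norm boundedness to pass from inner products to cosine similarities. The only cosmetic difference is that the paper bounds the denominator via the operator-norm inequality $\|\mathbf{q}_t\mathbf{S}_t\|\le\|\mathbf{q}_t\|\,\|\mathbf{S}_t\|$ (working with $\ell_2$-normalized $\overline{\mathbf{q}}_t,\overline{\mathbf{p}}_t$) rather than your $\ell_1$/max-row-norm bound, but the structure, the two-point-concentration argument for where the minimum is attained, and the final reduction to Proposition~\ref{prop:noise1} are identical.
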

\begin{proof}
    Considering the cosine similarity, we have:
    \begin{equation}
    \begin{aligned}
        \operatorname{Cos}(\mathbf{q}_t\mathbf{S}_t,\mathbf{p}_t\mathbf{S}_t) &= \frac{\langle\mathbf{q}_t\mathbf{S}_t,\mathbf{p}_t\mathbf{S}_t\rangle}{||\mathbf{q}_t\mathbf{S}_t||\,||\mathbf{p}_t\mathbf{S}_t||}\\
        &\geq \frac{\langle\mathbf{q}_t\mathbf{S}_t,\mathbf{p}_t\mathbf{S}_t\rangle}{(||\mathbf{q}_t||\,||\mathbf{S}_t||)(||\mathbf{p}_t||\,||\mathbf{S}_t||)}\\
        &= \frac{1}{||\mathbf{S}_t||^2} \langle\overline{\mathbf{q}}_t\mathbf{S}_t,\overline{\mathbf{p}}_t\mathbf{S}_t\rangle,
    \end{aligned}
    \end{equation}
    where $\overline{\mathbf{q}}_t = \frac{\mathbf{q}_t}{||\mathbf{q}_t||}, \overline{\mathbf{p}}_t = \frac{\mathbf{p}_t}{||\mathbf{p}_t||}$. Then the lower bound of the state distinguishability measure is given by:
    \begin{align}
        \min_{\mathbf{p}_t,\mathbf{q}_t}\operatorname{Cos}(\mathbf{q}_t\mathbf{S}_t,\mathbf{p}_t\mathbf{S}_t) &\geq \frac{1}{||\mathbf{S}_t||^2}\min_{\mathbf{p}_t,\mathbf{q}_t} \langle\overline{\mathbf{q}}_t\mathbf{S}_t,\overline{\mathbf{p}}_t\mathbf{S}_t\rangle.\label{eq:lower_bound_1}
    \end{align}
    The right side of Equation~\eqref{eq:lower_bound_1} is:
        \begin{equation}
    \begin{aligned}
        \min_{\mathbf{p}_t,\mathbf{q}_t} \langle\overline{\mathbf{q}}_t\mathbf{S}_t,\overline{\mathbf{p}}_t\mathbf{S}_t\rangle&=\min_{||\overline{q}_{t}||=||\overline{p}_{t}||=1}\langle\sum_{i=1}^{c}\overline{q}_{t}^i\mathbf{S}_t^i, \sum_{j=1}^{c}\overline{p}_t^j\mathbf{S}_t^j\rangle\\
        &=\min_{||\overline{q}_{t}||=||\overline{p}_{t}||=1} \sum_{i,j} \overline{q}_{t}^i\overline{p}_t^j \langle \mathbf{S}_t^i, \mathbf{S}_t^j\rangle\\
        &=\min_{i,j}\langle \mathbf{S}_t^i, \mathbf{S}_t^j\rangle
    \end{aligned}
    \end{equation}
    We assume the row norm of $\mathbf{S}_t$ is bounded, which means:
\begin{align}
    \exists\  \epsilon\quad \mathrm{s.t.}\quad \min_i ||\mathbf{S}_t^i||\geq\epsilon.\label{eq:S45}
\end{align}
The assumption implies that the contextual state is numerically bounded, thus the inner-product between state rows is governed by the cosine similarity:
    \begin{equation}
    \begin{aligned}
    \min_{\mathbf{p}_t,\mathbf{q}_t}\operatorname{Cos}(\mathbf{q}_t\mathbf{S}_t,\mathbf{p}_t\mathbf{S}_t) &\geq \frac{1}{||\mathbf{S}_t||^2}\min_{i,j}\langle \mathbf{S}_t^i, \mathbf{S}_t^j\rangle\\
    &= \frac{1}{||\mathbf{S}_t||^2}\min_{i,j} \operatorname{Cos}\left(\overline{\mathbf{S}_t^i},\overline{\mathbf{S}_t^j}\right)\cdot||\mathbf{S}_t^i||\cdot||\mathbf{S}_t^j||,\quad \text{(holds according to Defination~\ref{def:def1})}\\
    &\geq \frac{\epsilon^2}{||\mathbf{S}_t||^2}\min_{i,j} \operatorname{Cos}\left(\overline{\mathbf{S}_t^i},\overline{\mathbf{S}_t^j}\right),\text{(holds according to Equation~\eqref{eq:S45})}
    \end{aligned}
    \end{equation}
which shows that the lower bound of the state distinguishability measure is related to the cosine similarity between the two most different classes. Because row-sparse update formulation has reduced the right-side similarity to $\operatorname{\widetilde{Cos}}\left(\overline{\mathbf{S}_t^i}, \overline{\mathbf{S}_t^j}\right)$, it can be seen as a way to reduce the lower bound of the measure.
\end{proof}\hfill $\square$

In Proposition~\ref{prop:forget}, we demonstrate that the exponential cumulative decay inherent to vanilla gated linear attention adversely affects receptive fields. In contrast, row-sparse update formulation mitigates this issue and can theoretically achieve infinitely long receptive fields, as sparse decay ensures the retention of important information.

\begin{definition}
The attention score of \( \mathbf{x}_t \) with respect to \( \mathbf{x}_s \) is defined as $p_{ts}=\mathbf{q}_t((\prod_{j=s+1}^t\bm{\alpha}_j)\odot\mathbf{k}_s)^{\top}$.
\end{definition}

\begin{definition}
Let $p_{ts}$ denote the attention scores and \( P_{th} \) be a given threshold, where $t\geq s>0$. The receptive field at time $t$ is defined as: $M_t = \max\{t-s\ |\ p_{ts}\geq P_{th}\}$.
\end{definition}

\begin{proposition}\label{prop:forget}
    Let \( \mathbf{q}_t \) and \( \mathbf{k}_s \) be all-ones vectors, and let $t\geq s>0$. For vanilla gated linear attention with decay factors $\bm{\alpha}_t\in(0,1)$, the receptive field is upper-bounded by $M_t= \max_{k=1,\dots, d}\log(\frac{P_{th}}{d}) / \log(\max_{s+1\leq j \leq t} \bm{\alpha}_{j}^k)$, such that the attention score $p_{ts}$ is less than a given threshold \( P_{th} \). Conversely, row-sparse update formulation can retain important information across arbitrary spans, ensuring the attention score remains above the threshold \( P_{th} \), i.e., $M_t=t$.
\end{proposition}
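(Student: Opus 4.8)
The plan is to handle the two halves by different means: a direct geometric estimate of the decayed attention score for vanilla gated linear attention, and a structural observation about \emph{which} rows actually incur decay under the row-sparse update. For the vanilla case I would start from the definition $p_{ts}=\mathbf{q}_t\big((\prod_{j=s+1}^t\bm{\alpha}_j)\odot\mathbf{k}_s\big)^{\top}$, substitute $\mathbf{q}_t=\mathbf{k}_s=\mathbf{1}$ to obtain the coordinatewise sum $p_{ts}=\sum_{k=1}^d\prod_{j=s+1}^t\alpha_j^k$, and then bound each coordinate product by $\big(\max_{s+1\le j\le t}\alpha_j^k\big)^{\,t-s}$ since every factor lies in $(0,1)$. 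Under the natural normalization $P_{th}<d$ (so that $\log(P_{th}/d)<0$ and the bound is positive), whenever $t-s$ exceeds $\log(P_{th}/d)/\log\big(\max_{s+1\le j\le t}\alpha_j^k\big)$ for \emph{every} $k$ — i.e. $t-s>M_t$ as defined — dividing through by the negative quantity $\log(\max_j\alpha_j^k)$ flips the inequality and gives $\big(\max_j\alpha_j^k\big)^{\,t-s}<P_{th}/d$ for each $k$, hence $p_{ts}<P_{th}$. Contrapositively, $p_{ts}\ge P_{th}$ forces $t-s\le M_t$, which is exactly the claimed upper bound on the receptive field.

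For the row-sparse case the key point is the update rule itself: a state row is multiplied by $\mathbf{\Lambda}_t$ only on the steps where it is selected, and is left untouched otherwise (read off from $\mathbf{S}_t^i=\mathbf{\Lambda}_t\mathbf{S}_{t-1}^i+\mathbf{k}_t^{\top}\mathbf{v}_t$ holding only for $i\in\mathcal{T}$). Hence the cumulative decay applied to information written into row $i$ at step $s$ and read at step $t$ is the partial product $\prod_{j\in\mathcal{A}_i,\,s<j\le t}\bm{\alpha}_j$ over the sparse active set $\mathcal{A}_i$, not the full $\prod_{j=s+1}^t$. I would then exhibit the extreme case in which the row carrying the relevant token is never re-selected after step $s$: the active product over $(s,t]$ is empty, so $p_{ts}=\mathbf{q}_t\mathbf{k}_s^{\top}=d\ge P_{th}$ for all $t>s$, and therefore $M_t$ attains its maximal possible value, giving $M_t=t$. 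More generally, whenever the relevant row is re-activated only finitely often, the resulting finite product stays above threshold for arbitrarily large $t-s$, so the receptive field is unbounded — in contrast to the uniform geometric shrinkage of the vanilla case.

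The routine half is the vanilla estimate; the part that needs care is making the row-sparse statement precise, since its force rests entirely on the modeling convention that unselected rows incur no decay. I would therefore state that convention explicitly and phrase the "infinite receptive field" conclusion as: under row-sparsity there exist rows whose stored information is decayed only on a sparse subset of steps, so the decayed attention score need not vanish as $t\to\infty$, and in the best arrangement (relevant token isolated in a never-reselected row) it is constant in $t$, yielding $M_t=t$.
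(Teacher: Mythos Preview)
Your proposal is correct and follows essentially the same route as the paper. For the vanilla half you reproduce the paper's argument exactly (coordinatewise bound $\prod_j\alpha_j^k\le(\max_j\alpha_j^k)^{t-s}$, then the necessary-condition/contrapositive step); for the row-sparse half the paper simply asserts that ``decay values can be scattered to actually equal $1$'' and exhibits a channel $k$ with $\alpha_j^k=1$ for all $s+1\le j\le t$, which is precisely your ``never-re-selected row / empty product'' case stated more tersely---your version just makes the underlying modeling convention (unselected rows incur no decay) explicit, which is a good choice.
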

\begin{proof}
    Assume that \( \mathbf{q}_t = \mathbf{k}_s = [1,\dots, 1]\in\mathcal{R}^d\), then the attention score:
    \begin{align}
        p_{ts}&=\mathbf{q}_t((\prod_{j=s+1}^t\bm{\alpha}_j)\odot\mathbf{k}_s)^{\top}=\sum_{k=1}^d\prod_{j=s+1}^t\bm{\alpha}_{j}^k,
    \end{align}
    which can characterize the utility of only decay.

    Effective receptive field requires the attention score $p_{ts}\geq P_{th}$. The corresponding necessary condition is:
    \begin{align}
        \exists\ k,\quad &\text{s.t.}\quad  (\max_{s+1\leq j \leq t} \bm{\alpha}_{j}^k)^{t-s} \geq \frac{P_{th}}{d}.
    \end{align}
    That is, 
    \begin{align}
        \exists\ k,\quad &\text{s.t.}\quad t-s \leq \log(\frac{P_{th}}{d}) / \log(\max_{s+1\leq j \leq t} \bm{\alpha}_{j}^k)
    \end{align}
    
    Therefore, $M_t = \max_{k=1,\dots, d}\log(\frac{P_{th}}{d}) / \log(\max_{s+1\leq j \leq t} \bm{\alpha}_{j}^k)$ represents the upper bound of receptive fields. For earlier tokens at time $s'$, if the receptive field exceeds this upper bound ($t-s'>M_t$), the information is forgotten as $p_{ts}< P_{th}$. 

    In contrast, for row-sparse update variant of linear attention whose decay values can be scattered to actually equal $1$, we can ensure the retention of important information across arbitrary spans by simply considering:
    \begin{align}
        \exists\ k,\quad &\text{s.t.}\quad  \bm{\alpha}_{j}^k=1 \ \text{ for } \ \forall s+1\leq j \leq t.
    \end{align}
    This implies that row-sparse update linear attention can ensure $M_t = t$.
\end{proof}\hfill $\square$

\begin{figure}
    \centering
    \includegraphics[width=0.8\textwidth]{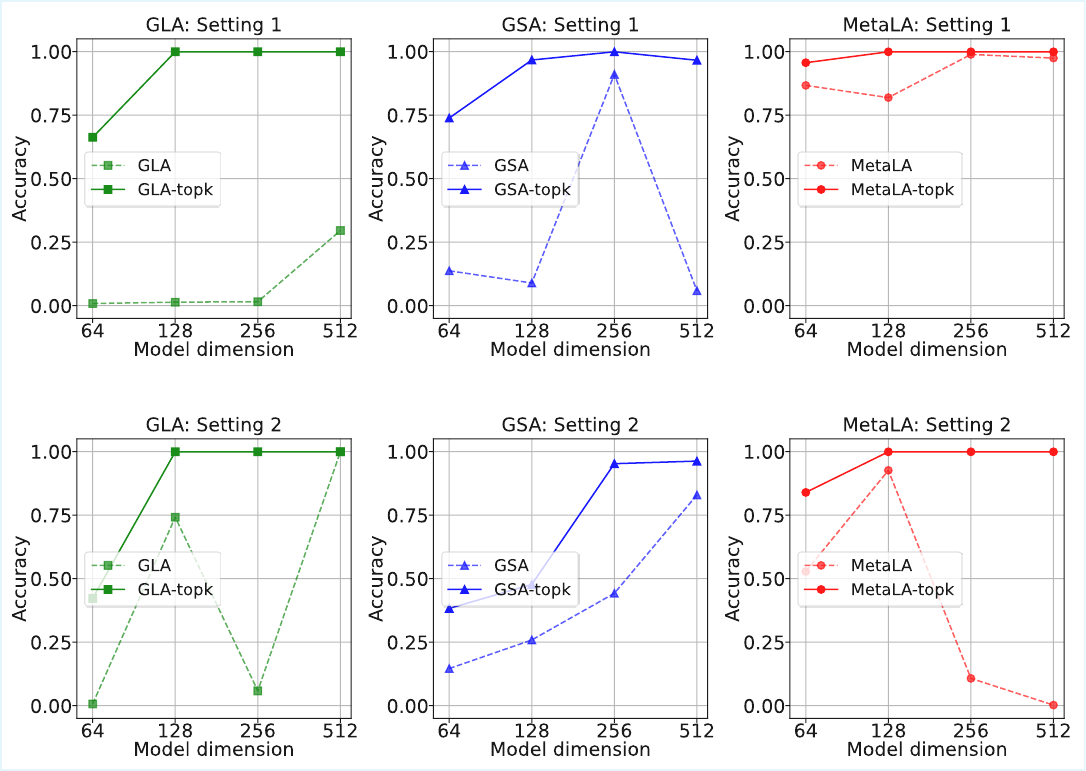}
    \caption{\textbf{MQAR results of linear attention models and their top-$k$ row-sparse variants.} Settings 1 and 2 introduce zero tokens and random noise tokens, respectively. Top-$k$ row-sparse updates lead to longer effective context and more accurate information storage within the state. Moreover, with row-sparse updates, recall performance increases monotonically with respect to state size (i.e., model dimension).}
    \label{fig:mqa_topk}
\end{figure}

\textbf{Synthetic experiment validation.}  
We validate our theoretical analysis by applying top-$k$ sparsity during the state update process of three linear attention models, and evaluate recall performance on the synthetic MQAR task~\citep{arora2024zoology}. Specifically, only the state rows corresponding to the $k$ largest values in $\mathbf{k}_t$ are updated at each step. In MQAR (Multi-Query Associative Recall), the model must retrieve previously stored key-value pairs in response to multiple queries. We evaluate recall under two settings:  
(1) In the first setting, zero tokens are inserted between key-value pairs and queries to explicitly evaluate the model’s effective receptive field.  
(2) In the second setting, random noise tokens are added, testing the model’s robustness to interference during memory retention.

We conduct experiments on GLA~\citep{yang2024gated}, MetaLA~\citep{chou2024metala}, and GSA~\citep{zhang2024gated}—three representative linear attention variants. These models respectively incorporate gated updates, decay-key coupling, and two-pass recurrence. We set $k = d/4$, and train 2-layer models with alternating token and channel mixers, using 2 attention heads. As shown in Figure~\ref{fig:mqa_topk}, the results demonstrate that row-sparse updates offer a simple and general approach for significantly enhancing the recall capabilities of vanilla linear attention, leading to longer effective context and more accurate information storage within the state. Moreover, with row-sparse updates, recall performance increases monotonically with respect to state size (i.e., model dimension).

\section{Pseudocode for Efficient SSE Implementations}\label{app:ops_pseudocode}
SSE expands the contextual state into multiple partitions, each consisting of different subsets of tokens. During operator execution, our goal is to preserve parallelism across partitions, rather than relying on sequential invocation. To minimize unnecessary computational overhead, we introduce two implementations tailored to different sequence length regimes: a masking-based version for short sequences, and a varlen-based version for long sequences.

\textbf{Naive implementation via masking for short contexts.} 
In the chunk-wise operator of linear attention, each kernel instance is responsible for processing a single data chunk. To improve execution efficiency, the chunk size is typically set to a power of two (e.g., 32, 64, or 128), which facilitates memory alignment and GPU-level parallelism. The optimal chunk size depends on both the target GPU architecture and the specific characteristics of the operator, and is typically determined through empirical tuning. However, variable-length training often includes a large proportion of short sequences, which can lead to per-partition lengths falling below the optimal chunk size, or even below the minimum threshold (e.g., 16). In such cases, the varlen implementation incurs additional operational overhead, and grouping tokens by their assigned partitions and processing them with cu\_seqlens can become inefficient. To address this, we adopt a masking-based strategy that increases parallelism through replication while enabling the use of larger chunk sizes. Specifically, each activation is replicated across partitions, and masking is applied based on the top-$k$ selection to ensure that each partition only attends to its assigned tokens. This approach improves GPU utilization without requiring token reordering. The corresponding pseudocode is presented in Algorithm~\ref{alg:masking_impl}.

\begin{algorithm}[htbp]
\caption{SSE Implementation via Top-$k$ Masking}
\label{alg:masking_impl}
\KwIn{$\mathbf{Q}, \mathbf{K}, \mathbf{V} \in \mathbb{R}^{L\times H \times D}$, $\mathbf{E} \in \mathbb{R}^{L \times N}$, cu\_seqlens $\in \mathbb{R}^{S}$, Number of partitions $N$, Top-k partitions $K$}
\KwOut{$\mathbf{O} \in \mathbb{R}^{L\times H \times D}$}
\tcp{Compute top-$k$ selection mask}
$\mathbf{M}_{\text{topk}} = \operatorname{TopkAndMask}(\mathbf{E}, K, \text{dim}=1) \in \{0,1\}^{L \times N}$

\tcp{Repeat input across partitions and apply masking}
$ \mathbf{Q}, \mathbf{K}, \mathbf{V} = \operatorname{Repeat}(\mathbf{Q}, \mathbf{K}, \mathbf{V}, N,\text{dim}=1) \in \mathbb{R}^{L \times N\times H \times D} $\;
$\mathbf{Q} = \mathbf{Q} \odot \mathbf{M}_{\text{topk}}$ \;
$\mathbf{K} = \mathbf{K} \odot \mathbf{M}_{\text{topk}}$ \;
$\mathbf{V} = \mathbf{V} \odot \mathbf{M}_{\text{topk}}$ \;

\tcp{Rearrange for linear attention}
$ \mathbf{Q}, \mathbf{K}, \mathbf{V} = \operatorname{Rearrange}(\mathbf{Q}, \mathbf{K}, \mathbf{V}, \text{dim}=(1,2)) \in \mathbb{R}^{L \times (N H) \times D} $\;

\tcp{Linear attention computation}
$\mathbf{O}=\operatorname{LinearAttention}(\mathbf{Q}, \mathbf{K}, \mathbf{V}, \text{cu\_seqlens})$\;
\Return $\mathbf{O}$\;
\end{algorithm}

\textbf{Efficient implementation via varlen technique for long contexts.} During the long-context continual training phase, the sequence lengths are generally longer and more evenly distributed, allowing chunk-wise computation to operate only on the relevant tokens within each partition. This avoids the redundant computations over masked tokens inherent in the naive masking-based implementation. Specifically, we first derive the top-$k$ partition indices and use them to reorder the QKV vectors, grouping tokens sequentially by partition (from 1 to $N$) within each sample. Next, a new cu\_seqlens is computed based on the reordered sequences and their corresponding partition assignments. At this stage, each resulting subsequence corresponds to a specific partition within a specific sample. Given this reordering and the updated cu\_seqlens, all partitions can be processed in parallel using chunk-wise linear attention, without introducing additional computational overhead. This implementation exhibits favorable scalability with respect to state size $N$, maintaining nearly constant overhead as long as $K$ (the number of selected partitions) remains fixed. The corresponding pseudocode is presented in Algorithm~\ref{alg:varlen_impl}.

\begin{algorithm}[htbp]
\caption{SSE Implementation via Varlen Technique}
\label{alg:varlen_impl}
\KwIn{$\mathbf{Q}, \mathbf{K}, \mathbf{V} \in \mathbb{R}^{L\times H \times D}$, $\mathbf{E} \in \mathbb{R}^{L \times N}$, cu\_seqlens $\in \mathbb{R}^{S}$, Number of partitions $N$, Top-k partitions $K$}
\KwOut{$\mathbf{O} \in \mathbb{R}^{L\times H \times D}$}

\tcp{Obtain reorder index $\mathbf{I} \in \mathbb{R}^{KL}$ and updated sequence offsets}
$\mathbf{I}, \text{new\_cu\_seqlens}= \operatorname{GetIndexAndOffsets}(\mathbf{E}, K, \text{cu\_seqlens})$

\tcp{Group tokens by partition order}
$ \mathbf{Q}, \mathbf{K}, \mathbf{V} = \operatorname{Reorder}(\mathbf{Q}, \mathbf{K}, \mathbf{V}, \mathbf{I},\text{dim}=0) \in \mathbb{R}^{(KL) \times H \times D} $\;

\tcp{Linear attention computation with varlen partitioning}
$\mathbf{O}=\operatorname{LinearAttention}(\mathbf{Q}, \mathbf{K}, \mathbf{V}, \text{new\_cu\_seqlens})$\;
\Return $\mathbf{O}$\;
\end{algorithm}

\textbf{Sequential kernel invocation via for-loop.} For comparative analysis, we present the vanilla sequential implementation. Initially, an additional dimension representing partitions is introduced, and tokens are gathered according to their assigned partitions. Subsequently, the chunk linear attention kernel is sequentially invoked over all partitions within a for-loop, bypassing varlen control. While this approach circumvents the overhead potentially introduced by varlen, it sacrifices inherent parallelism across partitions. Consequently, this method exhibits poor scalability: runtime increases significantly with larger values of $N$ due to its sequential computation.

\section{Experiment Details}\label{app:exp_details}
\textbf{Model Configurations.} Model configurations for our experiments are summarized in Table~\ref{tab:model_hyper}. For SSE, we utilize a single shared partition and set the low-rank dimension to 64 for QK projections. The coefficient for the auxiliary loss is 0.01 across both SSE and MoM. All models employ a Multi-Head Attention (MHA) and SwiGLU~\citep{shazeer2020glu} architecture.

\begin{table*}[h]
\centering
\small
\setlength{\tabcolsep}{4pt}
\centering
\caption{\textbf{Model Architectures.}}
\label{tab:model_hyper}{
\begin{tabular}{cccccccc}
\toprule
Model Size & Non-Embedding Params & Layers & Hidden Dimension & Heads\\
\midrule
600M & 300M & 24 & 1024 & 8  \\
2B  & 1.3B  & 18 & 2304 & 18  \\
\bottomrule
\end{tabular}}
\end{table*}

\textbf{12B SSE-H Conversion.} To further evaluate the scalability of SSE, we convert a pretrained 12B Transformer into an SSE-H variant. The conversion involves a layer-wise stacking pattern of SSE–SWA–SSE–SWA–Softmax layers, applied during the 128k long-context training stage, followed by supervised distillation. As shown in Table~\ref{tab:reasoning_12b}, SSE-H matches the mathematical reasoning accuracy of its softmax-attention counterpart, demonstrating the model’s scalability to larger model sizes.

\begin{table*}[h]
\centering
\small
\setlength{\tabcolsep}{4pt}
\centering
\caption{\textbf{Reasoning ability of the 12B SSE-H variant under the conversion paradigm.}}
\label{tab:reasoning_12b}{
\begin{tabular}{l|ccccccc}
\toprule
\textbf{Model} & AIME24 & AIME25 & MATH500 & OlympiadBench & AMC23 \\
\midrule
Transformer-12B & 75.7 & 58.7 & 95.8 & 82.7 & 97.2 \\
\textbf{SSE-H-n4k1-12B} & 74.3 & 57.3 & 96.0 & 85.3 & 96.3 \\
\bottomrule
\end{tabular}}
\end{table*}

\textbf{Receptive Field Analysis.} To further validate the effectiveness of SSE in long-context modeling, we analyze the receptive field of a pretrained 2B SSE model and compare it with that of a GLA baseline. As visualized in Figure~\ref{fig:crf}, we compute the receptive field across all layers by examining the input-dependent gating matrices after pretraining (with a maximum sequence length of 8k). Specifically, we extract the last 128 tokens and measure the effective receptive field width across different channels in each layer using a threshold of 0.001. The results show that SSE consistently exhibits larger receptive fields than GLA across all layers, confirming SSE's enhanced capacity for long-range information integration.

\begin{figure}
    \centering
    \includegraphics[width=0.8\textwidth]{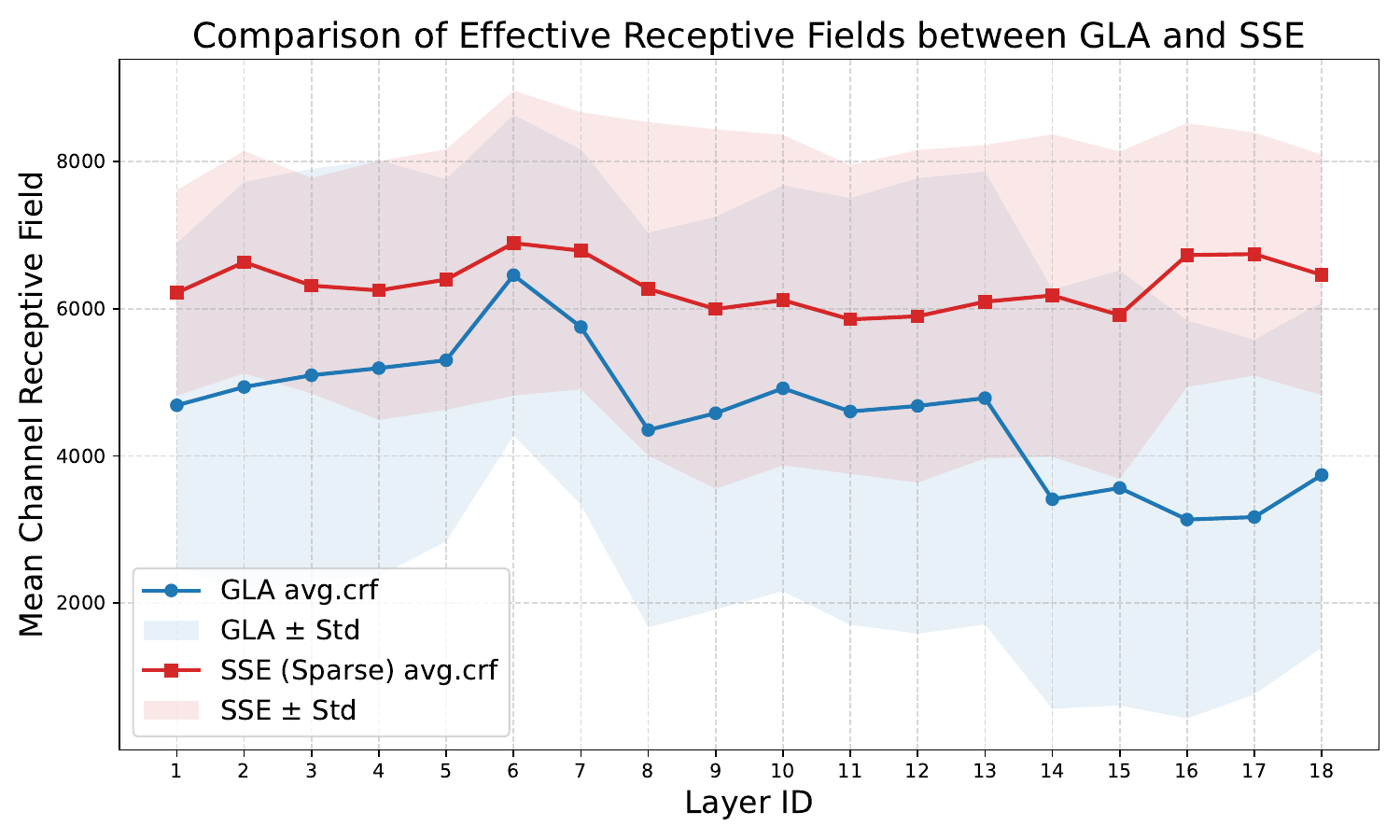}
    \caption{\textbf{SSE exhibits larger receptive fields than GLA.} We visualize the receptive field width of pretrained 2B SSE and GLA models after 8k-seqlen training. Values are computed over the last 128 tokens using a gating threshold of 0.001. SSE shows consistently broader receptive fields across all layers.}
    \label{fig:crf}
\end{figure}

\begin{table*}[t]
\centering
\small
\caption{\textbf{Performance comparison on NIAH tasks in RULER after long-context extension.} All models have 2B parameters with a context length of 32k. Results are reported in a zero-shot setting.}
\label{tab:ruler_32k}

\begin{tabular}{l|ccc|ccc|ccc}
\toprule
\multirow{2}{*}{\textbf{Model}} 
& \multicolumn{3}{c|}{\textbf{S-NIAH-1}} 
& \multicolumn{3}{c|}{\textbf{S-NIAH-2}} 
& \multicolumn{3}{c}{\textbf{S-NIAH-3}} \\
\cmidrule(lr){2-4} \cmidrule(lr){5-7} \cmidrule(lr){8-10}
& 8K & 16K & 32K 
& 8K & 16K & 32K 
& 8K & 16K & 32K \\
\midrule
Transformer 
& 100.0 & 100.0 &  99.4 
& 100.0 & 100.0 & 70.8 
& 97.8 &98.8 &67.4 \\
\textbf{SSE-H-n4k1} 
& 100.0 & 100.0 &  93.6 
& 100.0 & 100.0 &  84.4 
& 100.0 &  98.2 & 90.2 \\
\bottomrule
\end{tabular}

\vspace{0.3cm}

\begin{tabular}{l|ccc|ccc|ccc}
\toprule
\multirow{2}{*}{\textbf{Model}} 
& \multicolumn{3}{c|}{\textbf{MK-NIAH-1}} 
& \multicolumn{3}{c|}{\textbf{MQ-NIAH}} 
& \multicolumn{3}{c}{\textbf{MV-NIAH}} \\
\cmidrule(lr){2-4} \cmidrule(lr){5-7} \cmidrule(lr){8-10}
& 8K & 16K & 32K 
& 8K & 16K & 32K 
& 8K & 16K & 32K \\
\midrule
Transformer 
& 95.8 &90.0  &55.4
& 86.1 & 65.5 &28.6 
& 92.4 & 53.0  &18.1 \\
\textbf{SSE-H-n4k1} 
& 91.0&  84.8 &64.6 
& 89.1 & 67.8 &44.6 
& 87.3 & 66.8  &40.6 \\
\bottomrule
\end{tabular}

\end{table*}

\end{document}